\def\eqref#1{equation~\ref{#1}}
\def\1{\bm{1}}
\def\vtheta{{\bm{\theta}}}
\def\vs{{\bm{s}}}
\def\vx{{\bm{x}}}
\def\vy{{\bm{y}}}
\def\mE{{\bm{E}}}
\def\mI{{\bm{I}}}
\def\mP{{\bm{P}}}
\def\mW{{\bm{W}}}
\def\mX{{\bm{X}}}
\def\mZ{{\bm{Z}}}
\DeclareMathAlphabet{\mathsfit}{\encodingdefault}{\sfdefault}{m}{sl}
\SetMathAlphabet{\mathsfit}{bold}{\encodingdefault}{\sfdefault}{bx}{n}
\def\sR{{\mathbb{R}}}
\newtheorem{thm}{Theorem}
\newtheorem{defi}{Definition}
\newtheorem{lem}{Lemma}
\newtheorem{prop}{Proposition}
\newtheorem{assump}{Assumption}
\newcommand{\T}{\intercal}
\def\eqref#1{equation~\ref{#1}}
\def\1{\bm{1}}
\def\vtheta{{\bm{\theta}}}
\def\vs{{\bm{s}}}
\def\vx{{\bm{x}}}
\def\vy{{\bm{y}}}
\def\mE{{\bm{E}}}
\def\mI{{\bm{I}}}
\def\mP{{\bm{P}}}
\def\mW{{\bm{W}}}
\def\mX{{\bm{X}}}
\def\mZ{{\bm{Z}}}
\DeclareMathAlphabet{\mathsfit}{\encodingdefault}{\sfdefault}{m}{sl}
\SetMathAlphabet{\mathsfit}{bold}{\encodingdefault}{\sfdefault}{bx}{n}
\def\sR{{\mathbb{R}}}
\definecolor{darkblue}{rgb}{0, 0, 0.5}
\title{Unveiling the Mechanisms of Multi-Hop Reasoning in Transformers via Identity Bridge}
\author{Pengxiao Lin$^{1,2,}$\thanks{Equal contribution.} \quad
Zheng-An Chen$^{1,*}$ \quad
Zhi-Qin John Xu$^{1,2,3}$ \thanks{Corresponding author: \texttt{xuzhiqin@sjtu.edu.cn}.}\\
$^{1}$School of Mathematical Sciences, Shanghai Jiao Tong University \\
$^{2}$Institute of Natural Sciences, MOE-LSC, Shanghai Jiao Tong University \\
$^{3}$Shanghai Seres Information Technology Co., Ltd, Shanghai 200040, China.
}
\newtheorem{corollary}{Corollary}[section]
\begin{document}

\ifcolmsubmission
\linenumbers
\fi

\maketitle

\begin{abstract}
Large Language Models (LLMs) excel at multi-hop reasoning in distribution, yet fail on unseen compositions, a phenomenon known as the ``curse of two-hop reasoning". In this work, we argue that this phenomenon can be attributed to a missing supervision on the bridge entity. We formalize this gap by introducing identity bridge, a minimal supervision that enforces a identity mapping on bridge tokens. Under this supervision, even a one-layer transformer with uniform attention (Emb–MLP) can achieve out-of-distribution (OOD) two-hop generalization. We provide a theoretical analysis demonstrating that identity bridge induces an implicit regularization effect, leading the model to establish a direct subject-to-answer association. From an empirical perspective, the performance of standard GPT-2 models aligns closely with simple Emb--MLP models across varying levels of problem complexity. Finally, analyses of fine-tuned mainstream LLMs indicate that correct two-hop predictions consistently coincide with the establishment of a subject-to-answer relationship, extending our findings to realistic settings.
\end{abstract}

\section{Introduction}\label{sec::Intro}

Large language models (LLMs) achieve strong performance on a wide range of multi-step reasoning tasks, especially when assisted by chain-of-thought (CoT) \citet{wei2022cot, ToT, step_by_step}. Despite their strong CoT-assisted performance, LLMs exhibit very fragile reasoning abilities when forced to produce the final answer directly \citet{bai2025canttransformerslearnmultiplication}. On the one hand, models sometimes exhibit signatures consistent with latent multi-step processing, yet such evidence is suggestive rather than definitive and is often difficult to disentangle from shortcut strategies \citet{shortcut1, shortcut2}. On the other hand, controlled synthetic settings reveal striking and systematic failures, including the reversal curse \citet{berglund2024the, allen-zhu2025physics_3_1, allen-zhu2025physics_3_2, qi2023investigation} and the two-hop curse \citet{balesni2024two, 2023Faith}. These phenomena suggest that seemingly minor design choices in synthetic data and training objectives can crucially interact with optimization-induced inductive biases, leading to brittle multi-hop inference. While recent work has offered partial analyses and empirical observations \citet{2024Do, hoptwolate, 2024Grokked, 2025How, liu2026layer}, a theoretically grounded explanation of mechanisms behind multi-hop reasoning remains incomplete. This motivates the following questions.

\begin{enumerate}[leftmargin=*]
\item Why is OOD compositional generalization in two-hop reasoning so brittle, and what overlooked factors cause this brittleness in a way that admits rigorous characterization?

\item When a transformer appears to generalize in two-hop settings, what mechanism underlies this behavior: a genuine two-step implicit inference, or a shortcut solution that mimics composition?
\end{enumerate}

In this work, we revisit the two-hop curse, where models fit two single-hop relations ($a \to b$ and $b \to c$ in the training set) yet fail to answer the composed query ($a \to c$ at evaluation), following the setup and definitions of \citet{2024Grokked}. We highlight an often overlooked factor that standard synthetic setups do not enforce the bridge entity to be identical in the input and the output. Without this constraint, training can fit the two single-hop tasks using disjoint features, causing composition to break down. We introduce identity bridge, a minimal fix that adds a zero-hop self-mapping for each bridge token, unlocking OOD two-hop generalization. This also clarifies a gap between synthetic two-hop settings and pretrained models. Pretraining effectively endows models with a latent form of identity bridging, such as the ability to copy or restate text, which can partially support composition.

To better understand the mechanism, we analyze a simplified Emb--MLP model, namely a one-layer transformer with uniform attention that has been widely used to study transformer reasoning \citet{yao2025analysisreasoningbiaslanguage, huang2025generalization, 1layertransformer1, 1layertransformer2}. We show that gradient-descent training in Emb--MLP exhibits an implicit nuclear-norm bias, favoring low-rank parameters that share structure across tasks, and we prove that this bias is sufficient to transfer from one-hop training to OOD two-hop generalization even in a single layer.

At the same time, the one-layer solution we uncover for two-hop reasoning is clearly a form of shortcut pattern, rather than the kind of step-by-step implicit reasoning. Therefore, we further investigated a high-complexity setting with more relationships and empirically demonstrated that the performance degradation of standard GPT-2 is similar to that of the Emb--MLP model, suggesting that Emb--MLP is a good mechanism proxy. We additionally observe the same degradation across LLaMA-style, Qwen3-style, and Gemma-style models: all three are near-perfect at complexity $C=1$, but their OOD accuracy decreases steadily as the number of compositional branches grows. Moreover, we noticed the small initialization technique \citet{zhang2024initialization,zhang2025complexity,yao2025analysisreasoningbiaslanguage} or weight decay is useful to improve OOD generalization in high-complexity regimes. Our analysis links this improvement to tight representation alignment across layers, thereby confirming and extending the explanation in \citet{hoptwolate}.

To sum up, our contribution can be summarized as follows.

\begin{enumerate}[leftmargin=*]
\item 
We introduce the identity bridge, a minimal modification to the training data that adds a zero-hop task and reliably enables OOD two-hop composition in synthetic settings (Fig.~\ref{fig::acc_on_diff_models}).

\item Theoretically, We develop a uniform-attention theory (Theorems~\ref{thm:pos-ood} and~\ref{thm:neg-ood}) showing how identity bridge induces memorization that links $a$ to $c$, and why the standard setup without identity supervision fails.
\item 
Empirically, We show that Emb--MLP is a good proxy to standard transformer model, and that small initialization improves cross-layer alignment and OOD generalization (Fig.~\ref{fig:figure_alignment_along_epoch}). Controlled experiments with modern transformer models further show that the failure pattern persists across modern architectural choices (Fig.~\ref{fig::acc_on_diff_models}). We also  provide evidence that modern LLMs exhibit widespread shortcut-like memorization even without explicit two-hop supervision (Fig.~\ref{fig:layer_-1_novel_city_afterSFT}).

\end{enumerate}

\section{Related Work}
\label{sec:related}

\paragraph{Implicit reasoning failures on 2-hop data} 

\citet{allen-zhu2025physics_3_2} studies a phenomenon in which transformers face difficulty in
    manipulating already learned knowledge. \citet{press2022measuring} constructs a two-hop reasoning task on a knowledge graph, demonstrating that transformers can generalize to in-distribution data through long-term training, a phenomenon known as grokking. However, out-of-distribution data cannot be generalized. \citet{2025How} further investigated this phenomenon, demonstrating that in-distribution generalization stems from the presence of bridge entities in the two-hop task in the training set, thereby inducing alignment. However, these works did not propose methods to alleviate the generalization difficulties of OOD.

\paragraph{Compositionality gap in LLMs}
A growing literature documents a robust compositionality gap in LLMs on multi-hop tasks. \citet{press2022measuring, xu2024do} report a substantial accuracy drop from single-hop to two-hop queries, which does not vanish with increased model scale. \citet{2024Do} found limited evidence for implicit reasoning in large models by eliminating shortcuts, and \citet{2024DoReallyThinkStep} found a similar phenomenon in fine-tuning. \citet{DBLP:journals/corr/abs-2301-11293} also found that the model is more able to utilize popular knowledge rather than lesser-known knowledge during fine-tuning, which affects the model's reasoning performance. These works have demonstrated that it is possible for models to exploit shortcuts instead of genuine inference. To mechanistically explain these failures, \citet{hoptwolate} analyze intermediate transformer states via circuit methods and suggest that first-hop information may be formed too late to be utilized for composition. In contrast, \citet{liu2026layer} challenge hop-aligned circuit explanations by uncovering layer-order inversion, where answer information can become decodable earlier than the bridge entity.

\section{Preliminaries}
\subsection{two-hop Reasoning Task Setup}

We study two-hop reasoning as the composition of two mappings over disjoint entity sets based on \citep{2024Grokked,2025How}. Let
$\mathcal{E}_1$, $\mathcal{E}_2$, and $\mathcal{E}_3$ denote the subject, bridge, and answer entity sets, respectively, and let
$R_1$ and $R_2$ denote the first-hop and second-hop relation sets. We define two hop maps and associated two-hop compositional mapping as
\[
g_1: \mathcal{E}_1 \times R_1 \to \mathcal{E}_2,
\quad
g_2: \mathcal{E}_2 \times R_2 \to \mathcal{E}_3
\quad 
g(e_1,r_1,r_2) := g_2(g_1(e_1,r_1), r_2).
\]

Accordingly, the synthetic dataset contains the following three types of supervision:
(i) first-hop facts $(e_1,r_1) \mapsto g_1(e_1,r_1)$,
(ii) second-hop facts $(e_2,r_2) \mapsto g_2(e_2,r_2)$, and
(iii) identity-bridge facts $e_2 \mapsto e_2$ for bridge entities $e_2 \in \mathcal{E}_2$.
The OOD evaluation task consists of two-hop queries
\[
(e_1,r_1,r_2) \mapsto g(e_1,r_1,r_2),
\]
which are never provided as supervised training targets unless otherwise specified.

For intuition, one may think of $(a,r_1)\mapsto b$ as the first hop, $(b,r_2)\mapsto c$ as the second hop, and $(a,r_1,r_2)\mapsto c$ as the composed query. The identity bridge $b \mapsto b$ explicitly enforces alignment between the bridge token as an output of the first hop and as an input to the second hop.

\paragraph{Data complexity.} 
We define the complexity of the synthetic two-hop task as the number of first-hop relation types. Concretely, we fix $N \in \mathbb{N}$ as the number of entities, and set $|\mathcal{E}_1|=|\mathcal{E}_3|=N$, $|R_2|=1, |R_1|=C$, and partition the bridge set into $C$ disjoint slices. This definition captures the branching factor of the compositional task: a larger complexity means that each subject participates in more possible two-hop compositions.

Each first-hop relation $r_{1,j}$ selects the $j$-th bridge slice, while the shared second-hop relation $r_2$ maps bridge entities to answer entities. Therefore, for each subject entity, different choices of $r_{1,j}$ induce $C$ distinct two-hop compositions, so the dataset complexity is exactly $C$.

Given the symmetry between the two hops, we maintain an identical configuration for $r_2$ and focus our complexity analysis on $r_1$. Unless otherwise specified, training uses only one-hop and identity-bridge supervision, while all two-hop queries are reserved for OOD evaluation.

\subsection{Model Architecture}
Let $d_{\text{vob}}$, $d_m$, $d_k$ denote the vocabulary size, embedding dimension, and query-key projection dimension, respectively. The input vocabulary size and output are denoted by $\mathcal{V}_{\text{in}}$ and $\mathcal{V}_{\text{out}}$. The embedding matrix is $\mE \in \mathbb{R}^{|\mathcal{V}_{\text{in}}| \times d_m}$, with row $\mE_x$ corresponding to token $x \in \mathcal{V}_{\text{in}}$, and the projection matrix is $\mW_{\mathrm{proj}} \in \mathbb{R}^{d_m \times |\mathcal{V}_{\text{out}}|}$.

\paragraph{Embedding-MLP (Emb--MLP).} 
For tasks where attention serves only as an information-mixing mechanism, we adopt the Embedding-MLP model \citet{yao2025analysisreasoningbiaslanguage,huang2025generalization}, which can be viewed as a transformer layer with uniform attention. This formulation allows flexible handling of the input and output vocabularies. For a sequence $X=(x_1,\dots,x_s)$, we define:

\begin{defi}[Embedding--MLP (Emb--MLP)]
Given parameters $\vtheta=(\mE,\mW_{\mathrm{proj}})$, the model outputs logits is
\[
f_{\vtheta}(X) = \left(\sum_{t=1}^{s}\mE_{x_t}\right) \mW_{\mathrm{proj}} \in\mathbb{R}^{|\mathcal{V}_{\text{out}}|}.
\]
\end{defi}
\paragraph{Modern decoder variants.}
To test whether the observed behavior depends on the GPT-2 architecture, we additionally evaluate three decoder-only variants: LLaMA-style (Pre-RMSNorm, SwiGLU feed-forward layers, and RoPE), Qwen3-style (Pre-RMSNorm, SwiGLU, RoPE, and grouped-query attention), and Gemma-style (Pre-RMSNorm, GeGLU, RoPE, and multi-query attention). Each model uses 12 layers, 12 query heads, $d_{\mathrm{model}}=768$, and $d_{\mathrm{ff}}=1024$. We train each setting for 20 epochs using a learning rate of $10^{-4}$, batch size 500, weight decay 0.1, and cosine annealing with warmup, and report mean and standard deviation over three seeds.

For any learnable weight matrix $\mW \in \sR^{d_1 \times d_2}$, with $d_1$ and $d_2$ denoting the input and output dimensions, we initialize each entry from a Gaussian distribution $\mW_{i,j} \sim \mathcal{N}\!\left(0, \sigma^2 \right)$. The standard GPT-2 model take $\sigma = 0.02$. When we use a small initialization, we let $\sigma = d_1^{-\gamma}$, where $\gamma > 0.5$ since related work \citep{zhang2024initialization, yao2025analysisreasoningbiaslanguage} shows that networks initialized in this way often have stronger reasoning capabilities.

\section{Main Results}

We identify an overlooked input-output inconsistency and introduce explicit bridging entity supervision, identity bridge, to recover implicit two-hop reasoning in transformer. To explain this, we analyze Emb--MLP logits and provide a rigorous theoretical account of the underlying mechanism. Finally, we extend these insights to two-hop reasoning in LLMs.

\subsection{Motivation: the necessity of the identity bridge}

We begin by illustrating the core intuition with a minimal example. The key observation is that, in the absence of additional constraints, the model does not naturally align its outputs with its inputs.

To make this issue concrete, we utilize activation patching \citet{NEURIPS2022_6f1d43d5,lv2024interpretingkeymechanismsfactual,heimersheim2024useinterpretactivationpatching} to examine the efficacy of bridge token $b$. We consider the GPT-2 model at complexity $=2$. We feed the sequence $(a, r_1)$ into the model and extract the hidden states at the $r_1$ position across different layers. These extracted states are then injected into the position of $b$ within the sequence $(b, r_2)$, allowing the model to proceed with the subsequent inference. If the model recognizes that the extracted hidden state is equivalent to the input $b$, it should be able to further decode the corresponding answer $(b, r_2)\to c$. We report the activation patching accuracy in Fig.~\ref{fig::acc_on_diff_models}(left). The result shows that the model completely fails to decode $c$ on OOD two-hop reasoning, which indicates token $b$ doesn't bridge the gap between first hop and second hop.

We attribute this failure to a contextual disconnect between the input and output spaces. Under a simple two-hop task setup, the model is not explicitly required to establish an equivalence between the input token $b$ and the output token $b$, which is a trivial capability for well-trained LLMs. Consequently, a straightforward solution to address this issue is to augment the training data with $b\to b$ "zero-hop" sequences, which we term as identity bridge.

\subsection{Identity bridge Works: Emb–MLP as a proxy for GPT-2}

\begin{figure*}[!htb]
\centering
\includegraphics[width=\linewidth]{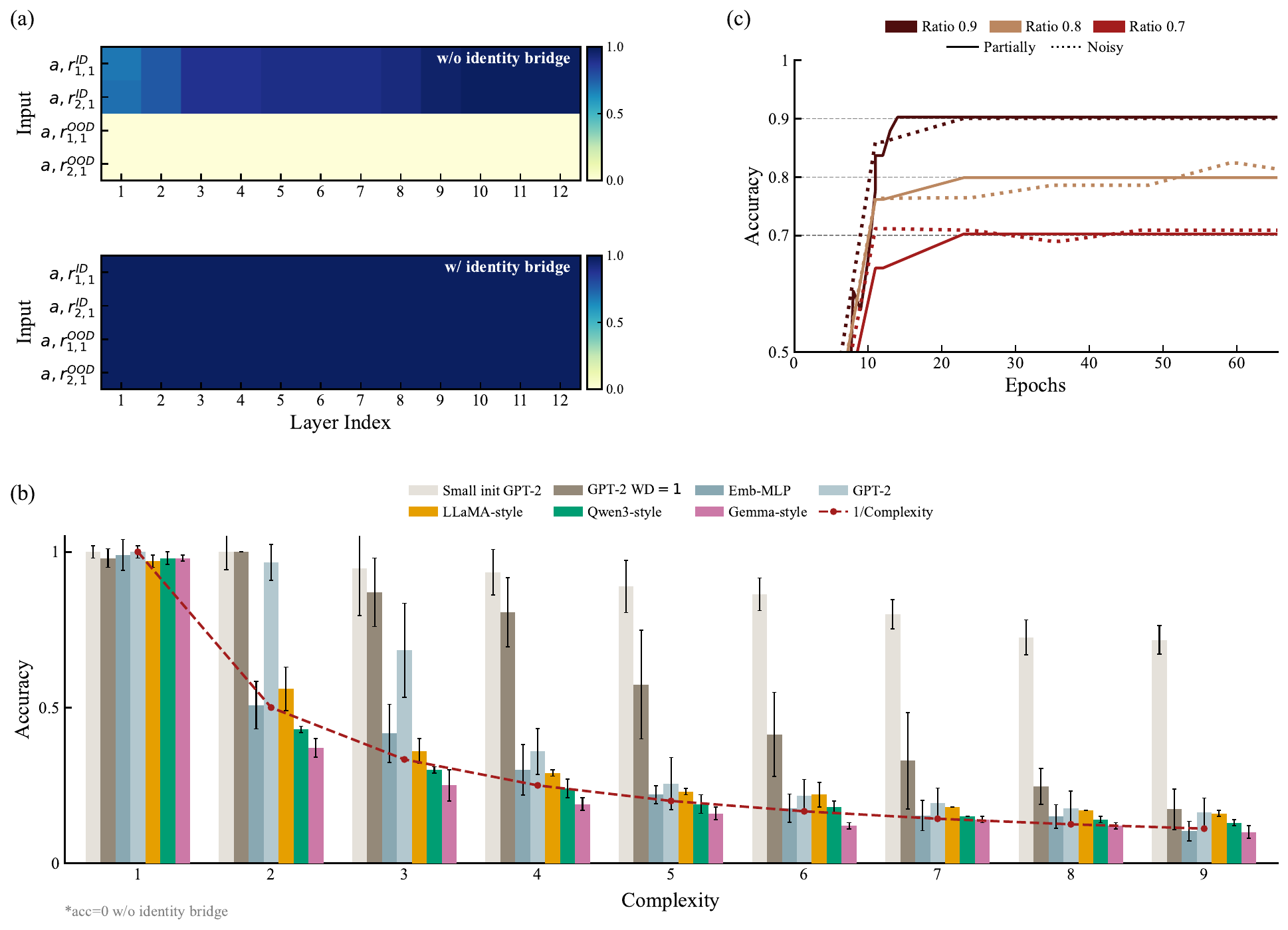}
\caption{{Left: Activation patching accuracy without (top) and with (bottom) the identity bridge. 
We report performance for both in-distribution (ID) relations ($r^{ID}$) and out-of-distribution (OOD) relations ($r^{OOD}$). Center: OOD two-hop accuracy from complexity $C=1$ to $C=9$ for Emb--MLP, standard and regularized GPT-2, and LLaMA-style, Qwen3-style, and Gemma-style models. For the three modern decoder variants, error bars denote standard deviation over three seeds.
Right: The effect of retaining a certain proportion of identity bridge, while masking or adding label noise to the remaining. The model's accuracy closely aligns with the proportion of retained data.}}
\label{fig::acc_on_diff_models}
\end{figure*}

Fig.~\ref{fig::acc_on_diff_models} reports test accuracy across models and complexity levels $C$. The identity bridge proves consistently effective, yielding non-zero OOD two-hop generalization for all models, irrespective of architectural nonlinearities or initialization schemes. Identity supervision suffices for the simplified Emb--MLP to achieve high performance, aligning with the implicit-regularization framework developed in Secs.~\ref{subsec::Cross-task memory} and~\ref{subsec::Uniform-attention theory}.

The architecture comparison in the center panel strengthens this conclusion. We set the LLaMA-style, Qwen3-style, and Gemma-style models at the same size. At $C=1$, models achieve nearly 1 OOD accuracy, respectively. As complexity increases to $C=9$, their accuracies decrease. The shared trend across different architectures indicates that the phenomenon is robust to modern decoder design choices.

Notably, as complexity $C$ increases, the accuracy of standard GPT-2 and Emb--MLP declines in tandem. This suggests that Emb--MLP effectively captures the mechanisms of the transformer, but also reveals that the implicit bias of gradient descent alone is insufficient for robust OOD composition at higher complexities. In contrast, the small-initialized GPT-2 degrades more gracefully, indicating a stronger regularization effect: the initialization-induced bias further constrains the latent geometry and better preserves the bridge--object coupling required for composition, a phenomenon we further discuss in Sec. \ref{sec:high-complexity}.

\subsection{Mechanism: how Emb–MLP solves two-hop reasoning}
\label{subsec::Cross-task memory}

We now examine how identity bridge enables composition of one-hop tasks. Let the input and output vocabularies be
\(\mathcal{V}_{\text{in}}=\mathcal{E}_1\cup\mathcal{E}_2\cup\mathcal{R}\) and
\(\mathcal{V}_{\text{out}}=\mathcal{E}_2\cup\mathcal{E}_3\), respectively. To make the mechanism explicit, we use the Emb--MLP as the proxy model and analyze the row-wise logit templates encoded by
\[
\mW \;=\; \mE\,\mW_{\mathrm{proj}} \ \in \ \mathbb{R}^{|\mathcal{V}_{\text{in}}|\times |\mathcal{V}_{\text{out}}|},
\]
where the $i$-th row of $\mW$ is the logit vector produced by input token $i$. $\mW_{ij}$ is the logit assigned by token $i$ to output token $j$. Furthermore, since Emb-MLP is a uniform attention proxy, to see the model's output logit for a sequence $X$, we only need to sum the corresponding rows of $\boldsymbol{W}$. This facilitates our observation of the composite mechanism in two-hop tasks. We also drew the output logit of standard GPT-2 model to show a similar structure.

\begin{figure*}[htbp]
\centering
\includegraphics[width=\linewidth]{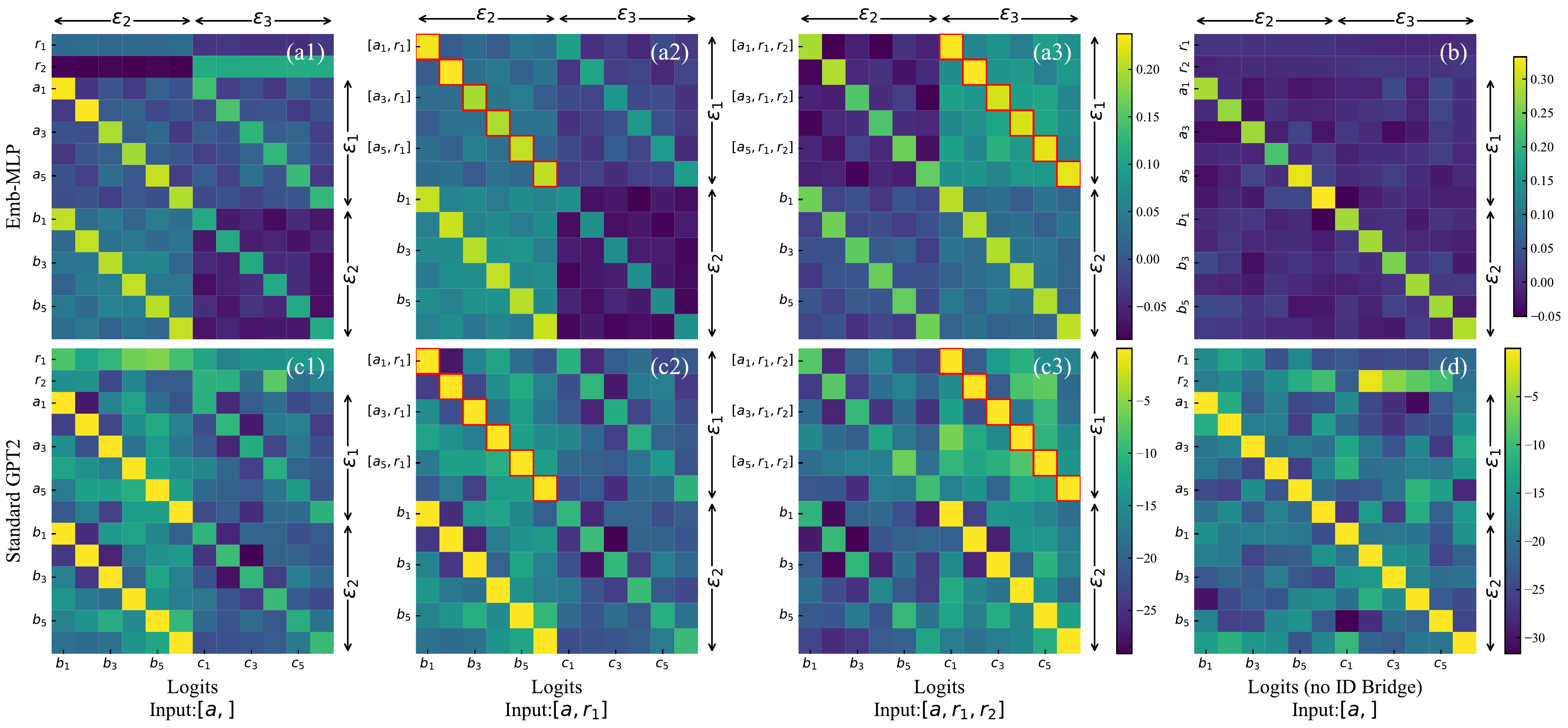}
\caption{\textbf{Row-wise logit templates in Emb--MLP and comparison with Standard GPT-2 at complexity $=1$ regime.}
Panels ({a1-a3}) and ({c1-c3}) are trained with identity bridge; panel (b, d) omits it.
\textbf{(a1)} Base logit matrix: the top two rows correspond to relation tokens $r_1$ and $r_2$; the remaining rows correspond to entity tokens in $\mathcal{E}_1$ and $\mathcal{E}_2$.
\textbf{(a2)} One-hop input $(a_i,r_1)$, visualized as the row-wise sum of $a_i$ and $r_1$.
\textbf{(a3)} Two-hop query $(a_i,r_1,r_2)$, visualized as the row-wise sum of $a_i$, $r_1$, and $r_2$.
Red boxes indicate the current argmax output token.
\textbf{(b, d)} Same visualization as in (a1) and (c1) but trained without identity supervision. }
\label{fig::logits}
\end{figure*}

Fig.~\ref{fig::logits}(a1) shows that relations act primarily as set selectors: $r_1$ boosts logits toward $\mathcal{E}_2$ while $r_2$ does the converse. Hence, the substantive computation of the two hops is carried by the entity rows of $\mW$. The key question is whether the subject rows for $a_i$ encode a discriminative bias toward the correct tail $c_i$.

Fig.~\ref{fig::logits}(a1) further indicates that each bridge token $b_i$ is both self-peaked and object-aligned, exhibiting high logit on $b_i$ and on its paired $c_i$. Training on $(a_i,r_1)$ would only concentrate the subject's logits on the corresponding bridging slices. However, under the implicit nuclear norm regularization introduced by gradient-based training, as discussed in Sec.~\ref{subsec::Uniform-attention theory}, the lowest-rank way to satisfy all constraints shares this structure across blocks, effectively transferring the bridge’s object-aligned peak to the subject rows. Consequently, the rows associated with $a_i$ inherit a tail-directed bias via their linkage to $b_i$, supplying the cross-task memory needed for composition. Figs.~\ref{fig::logits}(a2) and \ref{fig::logits}(a3) show that the model then completes both the one-hop task and the two-hop generalization by combining subject entities with relations. 

In contrast, without identity, non-label logits within a block tend to equalize, yielding a nearly diagonal-dominant pattern that conveys little information about the correct object $c_i$ for a given subject $a_i$, thereby leading to failure of two-hop reasoning.

To further demonstrate that Emb--MLP is a good approximation of the standard transformer model, we visualized the logit of the standard GPT-2 model. The logit structures for single tokens, one-hop data, and two-hop data all show consistency with Emb--MLP, indicating that although there are differences between the Emb--MLP model and the transformer model, the Emb--MLP model reproduces mechanism of the transformer  well in this task.

\subsection{Theory: uniform-attention analysis of two-hop generalization}
\label{subsec::Uniform-attention theory}
We analyze how identity bridge enables two-hop generalization in the Emb--MLP model on the dataset with complexity one, where attention acts only as uniform mixing. Previous experimental evidence has shown that this model contains similar mechanisms to the standard GPT-2 model. 

Before presenting the main results, we introduce some necessary concepts and related results. For a labeled example \((X,y)\) with \(y\in\mathcal{V}_{\text{out}}\), define the pairwise logit gap and multiclass margin by
\begin{equation}
s_{(X,y),y'} = f_{\vtheta}(X)_y - f_{\vtheta}(X)_{y'} \quad q(X,y)=\min_{y'\in\mathcal{V}_{\text{out}}\setminus\{y\}} s_{(X,y),y'} .    
\end{equation}
Because Emb--MLP model is positively homogeneous in its parameters, and under standard separability with cross-entropy training, the normalized direction \(\vtheta/\|\vtheta\|_2\) converges to a KKT point of the margin-maximization program established by \citet{lyu2019gradient}:
\begin{equation}
\label{equ::max_margin}
\min_{\vtheta} \tfrac12\|\vtheta\|_2^2,   \quad
\text{s.t.} \
s_{(X,y),y'}\ge 1, \quad \forall (X,y)\in\mathcal{D}_{\text{train}},\quad \forall y'\in\mathcal{V}_{\text{out}}\setminus\{y\}.    
\end{equation}
In the Emb--MLP model, {the logit matrix $\mW$ can be viewed as a reparameterization of $\mE$ and $\mW_{\mathrm{proj}}$}, which leads to the following {equivalent} convex reformulation in \(\mW\) introduced by \citet{huang2025generalization}:
\begin{equation}
\label{equ::min_nuclear_norm}
\min_{\mW} \ \tfrac12\|\mW\|_*^2 ,   \quad
\text{s.t.} \ 
s_{(X,y),y'}\ge 1,
\ \quad \forall (X,y)\in\mathcal{D}_{\text{train}},\quad \forall y'\in\mathcal{V}_{\text{out}}\setminus\{y\}. 
\end{equation}
Problem~(\ref{equ::min_nuclear_norm}) is equivalent to problem~(\ref{equ::max_margin}), as established by Lemma~1 in \citet{huang2025generalization}, whose proof follows an idea similar to that of \citet{recht2010guaranteed}. We briefly sketch the main argument here and refer readers to \citet{huang2025generalization} for full details. For Emb--MLP, the objective of problem~(\ref{equ::max_margin}) takes the form $$\min_{\mE,\mW_{\mathrm{proj}}} \frac{1}{2} \left( \|\mE\|_{\mathrm{F}}^2 + \|\mW_{\mathrm{proj}}\|_{\mathrm{F}}^2\right).$$ The equivalence follows from the well-known variational characterization linking the nuclear norm and the Frobenius norm, $$\|\mW\|_*^2 = \min_{\{ \mE \mW_{\mathrm{proj}} = \mW\}} \frac{1}{2} \left( \|\mE\|_{\mathrm{F}}^2 + \|\mW_{\mathrm{proj}}\|_{\mathrm{F}}^2\right)$$
which completes the argument.

This equivalent convex reformulation greatly facilitates the analysis. The problem is convex, as the objective function is convex and the constraints are affine. We therefore work with problem~(\ref{equ::min_nuclear_norm}) to derive margin-based consequences for two-hop generalization.

We now state the formal consequences for two-hop generalization.
\begin{thm}[Positive OOD margin with identity supervision]
\label{thm:pos-ood}
{Given Assumption \ref{assump::uniqueness} and large $N$}. Assume training includes zero-hop identity supervision over \(\mathcal{E}_2\), and let \(\mW^\star\) (equivalently \(\vtheta^\star\)) solve problem (\ref{equ::min_nuclear_norm}).
Then for every OOD query \(X=(a_i,r_1,r_2)\) with label \(y=c_i\), the multiclass margin is positive:
\begin{equation*}
q(X,y)\;>\;0.
\end{equation*}
Hence, the composed mapping \(g_2\!\circ g_1\) is recovered in the OOD two-hop task.
\end{thm}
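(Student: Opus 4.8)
The plan is to work directly with the convex program \eqref{equ::min_nuclear_norm} in the matrix variable $\mW = \mE\,\mW_{\mathrm{proj}}$ and exploit the block structure imposed by the task. First I would set up coordinates: index the rows of $\mW$ by $\mathcal{V}_{\text{in}} = \{r_1, r_2\} \cup \{a_1,\dots,a_N\} \cup \{b_1,\dots,b_N\}$ and the columns by $\mathcal{V}_{\text{out}} = \{b_1,\dots,b_N\} \cup \{c_1,\dots,c_N\}$, so $\mW$ decomposes into a few $N\times N$ sub-blocks (the $a$-to-$b$ block, the $a$-to-$c$ block, the $b$-to-$b$ block, the $b$-to-$c$ block, plus the two relation rows). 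Then I would enumerate the training constraints: the first-hop examples $(a_i, r_1)$ with label $b_i$, the identity examples $(b_i)$ with label $b_i$, and (if present) any second-hop examples $(b_i, r_2)$ with label $c_i$ — each contributing margin-$\ge 1$ inequalities between the relevant row sums. The OOD query $(a_i, r_1, r_2)$ produces the logit vector $\ve_{a_i}\mW + \ve_{r_1}\mW + \ve_{r_2}\mW$, i.e. the sum of the $a_i$-row, the $r_1$-row, and the $r_2$-row of $\mW$, and the goal is to show its largest coordinate sits at column $c_i$.

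The key structural step is to show that at the nuclear-norm optimum the relation rows act as clean "set selectors" and the entity rows are forced to share a common low-rank template, so that the $a_i$-row inherits, through its coupling to $b_i$ established by the first-hop constraint, the object-aligned peak at $c_i$ that the identity constraint plants in the $b_i$-row. Concretely I would either (i) exhibit an explicit candidate $\mW$ — built from a shared rank-one-per-index skeleton, e.g. terms like $\sum_i u_i (\ve_{b_i} + \ve_{c_i})^\T$ on the output side tied to $a_i$, $b_i$ on the input side, plus rank-one relation rows — verify it satisfies all training constraints with margin exactly $1$, and compute its nuclear norm; and then (ii) certify optimality via the KKT/dual conditions for \eqref{equ::min_nuclear_norm} (a matrix of dual multipliers supported on the active constraints whose appropriately signed combination has spectral norm $\le 1$ and aligns with the subgradient $\partial\|\mW\|_*$ at the candidate), which pins down the optimizer up to the symmetries of the problem. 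Once the optimizer's block structure is known, evaluating $q(X,y)$ for the OOD query is a direct computation: the $r_1$ and $r_2$ contributions cancel or dominate the cross-terms between the $\mathcal{E}_2$ and $\mathcal{E}_3$ output columns, leaving the $a_i$-row's $c_i$-peak as the strict argmax, hence $q(X,y) > 0$.

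I would also want the argument to use, rather than merely assert, the symmetry of the construction: the group of permutations of $\{1,\dots,N\}$ acting simultaneously on $\{a_i\}$, $\{b_i\}$, $\{c_i\}$ leaves $\mathcal{D}_{\text{train}}$ invariant, and since \eqref{equ::min_nuclear_norm} has a unique optimum (strictly convex objective on the feasible set's relevant face, or by an averaging argument), $\mW^\star$ must be equivariant under this group; this collapses the free parameters to a handful of scalars (within-block diagonal value, within-block off-diagonal value, relation-row magnitudes) and reduces the whole theorem to a finite, low-dimensional optimization plus a sign check on the resulting margin.

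The main obstacle I anticipate is the optimality certificate in step (ii): nuclear-norm subdifferentials are delicate when the candidate optimum is not full rank, and one must produce a dual matrix that simultaneously (a) is a valid subgradient direction at the candidate (agreeing with $\mU\mV^\T$ on the candidate's row/column spaces and having operator norm $\le 1$ off them) and (b) lies in the cone generated by the constraint gradients with nonnegative multipliers. Getting the rank and the active-set exactly right — and in particular confirming that the "equalized, diagonally-dominant" non-identity solution seen in Fig.~\ref{fig::logits}(d) is genuinely the optimizer when identity supervision is \emph{absent} but is strictly beaten once it is \emph{present} — is where the real work lies; the downstream margin inequality is then essentially bookkeeping. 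If a full certificate proves too unwieldy, a fallback is to prove the weaker but sufficient statement that \emph{any} feasible $\mW$ of rank $\le N$ meeting the identity and first-hop constraints with the observed block pattern already yields $q(X,y) > 0$, and argue separately (via the nuclear-norm bias) that the optimizer has this pattern.
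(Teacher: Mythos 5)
Your proposal lands on the same two structural pillars as the paper — permutation equivariance to collapse $\mW$ to a handful of scalar parameters, and a KKT analysis of the resulting small program — so the high-level strategy is the right one. But the execution you propose diverges from the paper's in ways that matter.

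First, the paper never exhibits an explicit optimal $\mW$ nor a matrix dual certificate in the nuclear-norm subdifferential $\partial\|\cdot\|_*$. It instead uses the restricted form (from \cite{huang2025generalization}, Lemma~\ref{lem::RFS1}) to get a \emph{closed-form scalar expression} for $\|\mW\|_*$ (Lemma~\ref{lem::nuclear_norm}), imposes the row-sum symmetry Assumption~\ref{assump::symmetry_assumption} to simplify it (Lemma~\ref{lem::svm_simplified_form}), replaces the absolute value $|a_1 d_1 - b_1 c_1|$ by a slack variable $t$ to obtain a smooth problem, and then runs \emph{scalar} KKT on that. The delicate subgradient bookkeeping you flagged as ``the main obstacle'' is precisely what the paper sidesteps by this reformulation; insisting on a $\mU\mV^\T$-type certificate would make your route substantially harder than necessary.

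Second, you do not anticipate the paper's decisive step: showing that at the optimum the two constraints $g_1\colon a_1\ge 1$ and $g_2\colon a_1+a_2+2e-c_1-c_2-1\ge 0$ are \emph{both active}. This is established by a perturbation/contradiction argument (Proposition~\ref{prop::two_choose_one}) followed by a KKT case analysis, and it is what lets the margin inequality $q(X,y)>0$ collapse to the one-dimensional condition $f<-\tfrac12$, which is then settled by chaining the second-hop and identity constraints $g_4,g_6$ to get $f\le -1$. Your fallback — proving positivity for ``any feasible $\mW$ of rank $\le N$ with the observed block pattern'' — would not suffice: the tightness of $g_1,g_2$ and hence the final margin inequality genuinely use stationarity at the nuclear-norm optimum, not just feasibility plus a rank bound. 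Finally, be aware that the paper leans on several unproven regularity and non-degeneracy assumptions (Assumptions~\ref{assump::symmetry_assumption}, \ref{assump::kkt_holds}, \ref{assump::non_degenerate_c1}); any complete version of your argument would need either to import them or to close those gaps, and your draft does not yet acknowledge that cost.
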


\noindent\textbf{Proof sketch.}
We first use the permutation symmetry of the dataset to get a highly structured optimal solution and obtain a closed-form objective function in this structure. {By further utilizing symmetry, we proved a set of symmetry conditions (Proposition \ref{prop::symmetry_prop}) that an optimal solution must satisfy, further simplifying the problem. Next, using the construction method and proof by contradiction, we strengthened the conditions (Proposition \ref{prop::positive_c_1} and Corollary \ref{coro::tight_g_2}) that the optimal solution must satisfy. } 
These links shrink the OOD margin check to a one-dimensional inequality that feasibility makes strictly positive, yielding a positive margin on every two-hop query and the success of OOD generalization. See appendix \ref{subsec::proof_for_thm1} for details of the proof.

\begin{thm}[Failure without identity supervision]
\label{thm:neg-ood}
{Given Assumption \ref{assump::uniqueness}.} If identity supervision is omitted, any problem of (\ref{equ::min_nuclear_norm}) satisfies, for each OOD query \(X=(a_i,r_1,r_2)\) with label \(y=c_i\), the multiclass margin is negative:
\begin{equation*}
q(X,y)\;<\;0.
\end{equation*}
Thus, the composed mapping fails on the OOD two-hop task.
\end{thm}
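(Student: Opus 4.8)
The plan is to re-run the argument behind Theorem~\ref{thm:pos-ood} with the identity-bridge constraints deleted and then read off the sign of the OOD margin, which flips. First I would use the diagonal $S_N$-symmetry of $\mathcal{D}_{\text{train}}$ that simultaneously permutes $a_i,b_i,c_i$: since \eqref{equ::min_nuclear_norm} is a convex program with $S_N$-invariant feasible set, averaging a minimizer over the group yields an equivariant minimizer $\mW^\star$, determined by twelve scalars --- the diagonal/off-diagonal pairs $(\alpha,\beta),(\gamma,\delta),(\alpha',\beta'),(\gamma',\delta')$ of the $\mathcal{E}_1\!\times\!\mathcal{E}_2$, $\mathcal{E}_1\!\times\!\mathcal{E}_3$, $\mathcal{E}_2\!\times\!\mathcal{E}_2$, $\mathcal{E}_2\!\times\!\mathcal{E}_3$ blocks, together with the constant rows $(u,v),(u',v')$ for $r_1,r_2$ (on $\mathcal{E}_2$ and $\mathcal{E}_3$ respectively). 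The separation constraints collapse to six inequalities: $\alpha-\beta\ge 1,\ \alpha+u-\gamma-v\ge 1,\ \alpha+u-\delta-v\ge 1$ from the first hop, and $\gamma'-\delta'\ge 1,\ \gamma'+v'-\alpha'-u'\ge 1,\ \gamma'+v'-\beta'-u'\ge 1$ from the second. The identity constraints $\alpha'-\beta'\ge 1,\ \alpha'-\gamma'\ge 1,\ \alpha'-\delta'\ge 1$ that powered Theorem~\ref{thm:pos-ood} are now absent, so the bridge rows are no longer forced to be self-peaked or object-dominated.

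I would then put the objective in closed form via the isotypic decomposition $\mathbb{R}^N=\langle\vone\rangle\oplus\vone^\perp$, under which $\mW^\star$ block-diagonalizes into a $4\times 2$ ``mean'' matrix $M_0$ (entries built from $\alpha+(N-1)\beta$, $\gamma+(N-1)\delta$, $\alpha'+(N-1)\beta'$, $\gamma'+(N-1)\delta'$, $\sqrt N\,u$, $\sqrt N\,v$, $\sqrt N\,u'$, $\sqrt N\,v'$) and a $2\times 2$ ``fluctuation'' matrix $M_1=\left(\begin{smallmatrix}\alpha-\beta & \gamma-\delta\\ \alpha'-\beta' & \gamma'-\delta'\end{smallmatrix}\right)$ with multiplicity $N-1$, so $\|\mW^\star\|_*=\|M_0\|_*+(N-1)\|M_1\|_*$. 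Minimizing this over the twelve scalars subject to the six inequalities --- with a nonnegative slack variable, as in the proof of Theorem~\ref{thm:pos-ood}, to handle the nuclear-norm subdifferential at the rank-deficient optimum --- the key structural fact is that the first-hop inequalities bound $\gamma,\delta$ only from above, while $\|M_1\|_*=(\alpha-\beta)+(\gamma'-\delta')$ on the entire region $\{\gamma-\delta=\alpha'-\beta',\ (\gamma-\delta)^2\le(\alpha-\beta)(\gamma'-\delta')\}$; the program therefore never rewards a positive subject-to-$c_i$ bias. The KKT conditions then pin the minimizer at the ``decoupled'' point $\gamma=\delta$, $\alpha'=\beta'$, $\alpha-\beta=\gamma'-\delta'=1$ with $r_1,r_2$ carrying the set-selection shifts at minimal magnitude: up to the row-shift gauge, the $\mathcal{E}_1\!\times\!\mathcal{E}_2$ and $\mathcal{E}_2\!\times\!\mathcal{E}_3$ blocks are $I_N-\tfrac1N\vone\vone^\top$, the $\mathcal{E}_1\!\times\!\mathcal{E}_3$ and $\mathcal{E}_2\!\times\!\mathcal{E}_2$ blocks vanish, and the $r_1,r_2$ rows are $(\vzero,-\tfrac1N\vone)$ and $(-\tfrac1N\vone,\vzero)$, which one checks is feasible with $\|\mW^\star\|_*=2\bigl((N-1)+N^{-1/2}\bigr)$.

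Finally I would evaluate the margin. For the OOD query $X=(a_i,r_1,r_2)$ the logit vector is $\mW^\star_{a_i,\cdot}+\mW^\star_{r_1,\cdot}+\mW^\star_{r_2,\cdot}$, equal to $\alpha+u+u'$ on $b_i$, $\beta+u+u'$ on $b_j$, $\gamma+v+v'$ on $c_i$, and $\delta+v+v'$ on $c_j$ for $j\ne i$; since $\alpha-\beta\ge 1$ the binding bridge competitor is $b_i$, so $q(X,c_i)=\min\bigl\{\gamma-\delta,\ (\gamma+v+v')-(\alpha+u+u')\bigr\}$. The first-hop inequality $\alpha+u-\gamma-v\ge 1$ forces $(\gamma+v+v')-(\alpha+u+u')\le (v'-u')-1$, and at the optimum $v'-u'=1/N<1$ while $\gamma=\delta$; hence $q(X,c_i)=-\tfrac{N-1}{N}<0$ for every held-out composition --- indeed the model outputs the bridge token $b_i$ instead of $c_i$ --- so $g_2\circ g_1$ fails on all OOD compositions.

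\textbf{Where the difficulty lies.} The delicate step is the optimality claim in the second paragraph: showing the nuclear-norm minimizer truly sits at the decoupled point, rather than at a partially coupled point that would transfer the $b$-rows' $c_i$-peak onto the $a$-rows --- exactly the mechanism Theorem~\ref{thm:pos-ood} exploits, where the constraints $\alpha'\ge\gamma'+1,\ \alpha'\ge\delta'+1,\ \alpha'\ge\beta'+1$ force the bridge rows into an identity-plus-object template that the subject rows must inherit under rank minimization. Here one must exclude \emph{every} such coupling, for which I would prove a matching lower bound $\|\mW\|_*\ge\|M_0\|_*+(N-1)\bigl((\alpha-\beta)+(\gamma'-\delta')\bigr)\ge 2\bigl((N-1)+N^{-1/2}\bigr)$ for all feasible $\mW$, with equality forcing $\gamma\le\delta$ together with domination of the $c$-column logits by the $b$-column logits on the OOD query --- either of which already gives $q(X,c_i)<0$ at any minimizer. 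Establishing that lower bound and its equality case (via the nuclear-norm subdifferential and the disjointness of the first- and second-hop constraint variables) is the real work; the symmetry reduction, the block form, and the margin computation are routine.
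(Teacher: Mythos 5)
Your high-level plan — symmetry reduction, isotypic split of the nuclear norm, a KKT/optimality argument to locate the minimizer, and then a margin computation showing the bridge token $b_i$ dominates the object token $c_i$ on the OOD query — matches the shape of the paper's proof, and your final margin calculation (the query outputs $b_i$ rather than $c_i$) is exactly the mechanism the paper identifies: they prove $b_1+b_2 < a_1+a_2$ in their parametrization, which is the same inequality as your $(\gamma+v+v')-(\alpha+u+u')<0$. However, you are missing the single most important structural ingredient of the paper's argument, and it is precisely what would plug the hole you flag as ``the real work.''

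The no-identity training set carries an extra discrete symmetry beyond the diagonal $S_N$: the involution $\tau$ that simultaneously swaps $\mathcal{E}_1\leftrightarrow\mathcal{E}_2$ on the input side, $\mathcal{E}_2\leftrightarrow\mathcal{E}_3$ on the output side, and $r_1\leftrightarrow r_2$. It maps first-hop constraints to second-hop constraints and vice versa, so it leaves the feasible set of \eqref{equ::min_nuclear_norm} invariant; since the nuclear norm is orthogonally invariant and the problem is convex, averaging a minimizer over $\tau$ (the paper's Lemma~\ref{lem::RSF2}) produces an equally optimal solution in which the $\mathcal{E}_1\!\to\!\mathcal{E}_2$ and $\mathcal{E}_2\!\to\!\mathcal{E}_3$ blocks are \emph{equal}, the $\mathcal{E}_1\!\to\!\mathcal{E}_3$ and $\mathcal{E}_2\!\to\!\mathcal{E}_2$ blocks are \emph{equal}, and the $r_1,r_2$ rows are swaps of one another. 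In your twelve-scalar notation this forces $(\alpha,\beta)=(\gamma',\delta')$, $(\gamma,\delta)=(\alpha',\beta')$, $(u,v)=(v',u')$ for free, so your fluctuation matrix becomes the symmetric $2\times 2$ matrix $\begin{pmatrix}\alpha-\beta & \gamma-\delta \\ \gamma-\delta & \alpha-\beta\end{pmatrix}$ with nuclear norm $2\max\{|\alpha-\beta|,\,|\gamma-\delta|\}$ exactly, and your delicate equality condition ``$\gamma-\delta=\alpha'-\beta'$ with the determinant sign'' is satisfied identically rather than needing a separate proof. The paper's Step~1 then shows $|\gamma-\delta|\le\alpha-\beta$ by a one-line monotone perturbation, Step~2 runs KKT on a three-parameter reduced problem, and Step~3 handles the boundary case. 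None of the matching-lower-bound machinery you propose is needed.

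This is exactly where identity supervision enters: adding the zero-hop task breaks the $\tau$-symmetry (it pins $\mathcal{E}_2\!\to\!\mathcal{E}_2$ to be identity-dominant while leaving $\mathcal{E}_1\!\to\!\mathcal{E}_3$ unconstrained), which is why Theorem~\ref{thm:pos-ood} requires the more laborious slack-variable/KKT route while Theorem~\ref{thm:neg-ood} collapses. As written, your proposal leaves open the optimality claim for the decoupled point over the full twelve-parameter feasible set — you would have to rule out every coupling between the first- and second-hop blocks by hand — and I am not confident your asserted optimal value $2\bigl((N-1)+N^{-1/2}\bigr)$ is correct as stated once the constraints are fully enumerated. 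The cleanest fix is to observe and exploit the $\tau$-swap before doing any nuclear-norm arithmetic.
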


\noindent\textbf{Proof sketch.}
Without identity supervision, the training constraints are perfectly symmetric inside each block. 
Because the objective is convex and permutation-invariant, we can average any optimal solution over all within-block permutations and obtain an equally optimal, fully symmetric one. 
A short KKT check then shows the non-label logits equalize within blocks, so subjects carry no preference toward their true objects. 
When we test a held-out two-hop composition, the signal from the subject does not point to the correct tail and the margin becomes negative. 
Hence the margins are negative for all OOD two-hop queries and composition fails. See Appendix \ref{subsec::proof_for_thm2} for details of the proof.

\subsection{Real Task Analysis}
\begin{figure*}[!htb]
    \centering
    \includegraphics[width=0.9\linewidth]{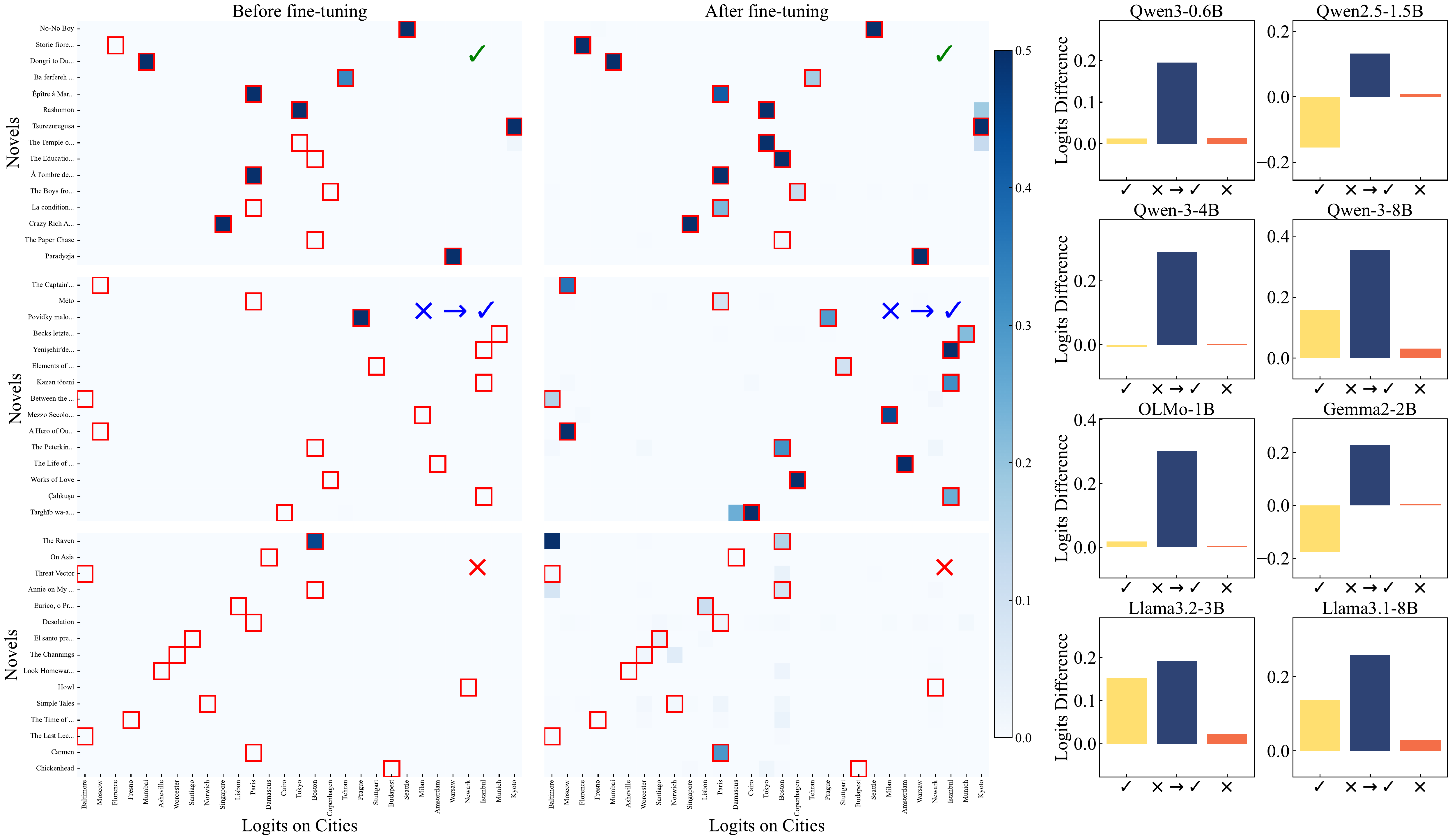}
    \caption{The probability of the model outputting the alternative city token when using the prompt corresponding to $(e_1, r_2)$ before and after fine-tuning on datasets. The red box indicates the answer to the corresponding two-hop reasoning data. The bar chart shows the change in the probability of the corresponding two-hop answer when using the prompt $(e_1, r_2)$ for different datasets. }
    \label{fig:layer_-1_novel_city_afterSFT}
\end{figure*}

In this section, we consider the mechanism of two-hop reasoning in real large models. As pointed out in \citet{press2022measuring}, despite the existence of a combinatorial gap, real large models still have two-hop reasoning capabilities. We finetune pretrained models on TWOHOPFACT dataset \citet{2024Do} to verifying our theoretical results. We filtered the data based on the bridge entity to ensure the OOD characteristics of the test data.
\paragraph{LLMs learn identity bridge in pretraining}
We perform a repeat task to monitor for identity bridges that may appear in the language model. Conceptually, the core requirement for the identity bridge is the model's ability to map the input and output of the bridge token $b$. Thus, we use the prompt ``Repeat the following token $\{ b\}$ directly'' with 1-shot example to analyze whether the model possesses an identity bridge-like capability during pretraining. We utilize the Olmo2-0425 \citet{olmo20242} version checkpoints for test and results are shown in Fig.~\ref{fig:IDbridge_test}.

\paragraph{LLMs learn $a\to c$ after fine-tuning}
For large models, since the parameters have been fully pre-trained, the alignment phenomenon is unlikely to be observed from the hidden state. The model should basically rely on the implicit regularization induced by the gradient descent algorithm to complete the task. In order to observe this mechanism, we extract the following three datasets based on the two-hop results for analysis: $\mathcal{D}_{\text{correct}}$ consists of data with correct two-hop reasoning whether fine-tuning or not, $\mathcal{D}_{\text{partial}}$ contains the data that are wrong in the first two hops of fine-tuning but correct after fine-tuning, and $\mathcal{D}_{\text{incorrect}}$ contains the data of fine-tuning the errors of the two-hop task.

As Fig. \ref{fig:layer_-1_novel_city_afterSFT} shown, after training on the single-hop task, even without seeing the corresponding two-hop data, for the correct data after training, when we use prompts such as $(e_1, r_2)$, such as ``the novel was born in the city of" but omit the ``author" relation, the model still establishes a strong correlation between the subject and the object, suggesting that the model actually implicitly establishes the identity bridge during the pre-training process which utilizes similar implicit regularization. 

We calculated the probability change of labels corresponding to two-hop data using this type of prompt. The results from different models consistently show that completing two-hop reasoning depends on improving the probability of the subject to the object. For the data that still got it wrong after fine-tuning, we found that the probability of the corresponding object was slightly improved. The experiments on other tasks such as MuSiQue dataset \citet{trivedi2021musique} could be referred at Sec.~\ref{sec:other_real_world_tasks}.

\begin{figure*}[!htb]
  \centering
  \begin{subfigure}[t]{0.68\textwidth}
    \centering
    \includegraphics[width=\linewidth]{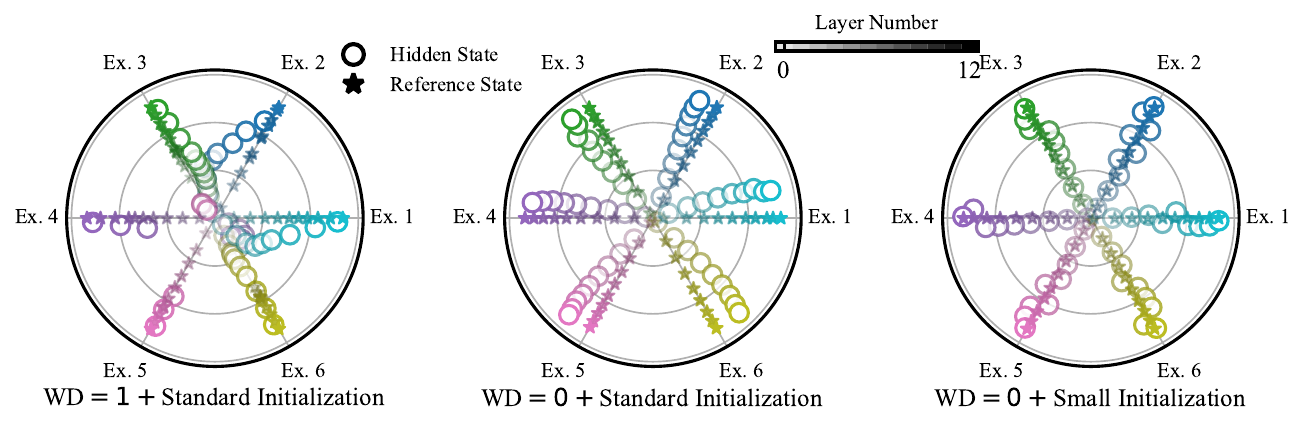}
    \caption{}
\label{fig:FIGURE_3_wd+init1.pdf}
  \end{subfigure}\hfill
  \begin{subfigure}[t]{0.32\textwidth}
    \centering
    \includegraphics[width=\linewidth]{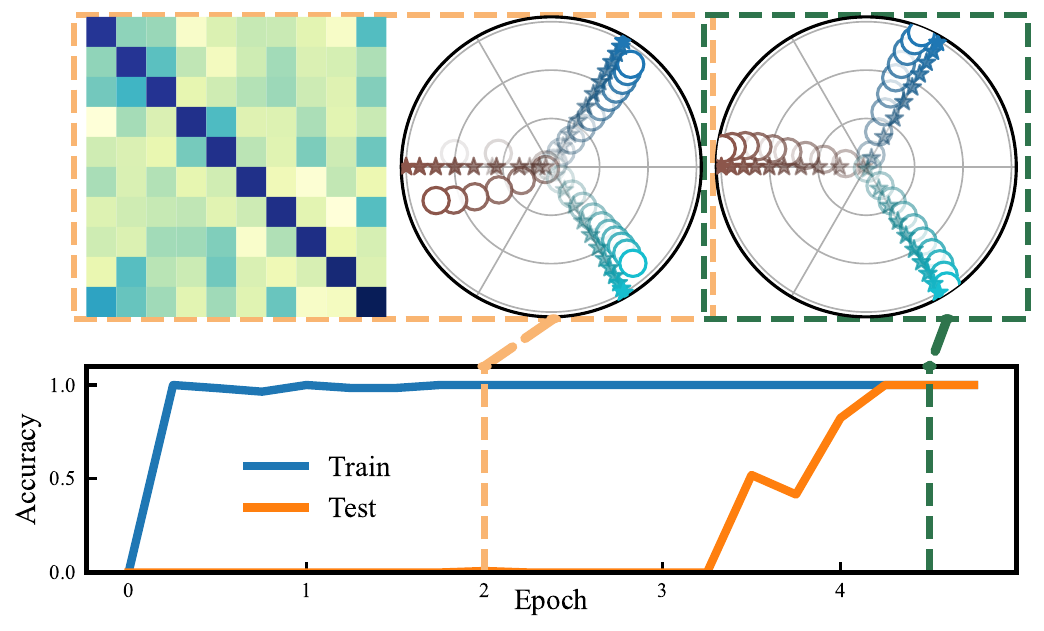}
    \caption{}
    \label{fig:figure_alignment_along_epoch}
  \end{subfigure}
  \caption{(a) T-SNE visualization of hidden states with respect to one-hop data and corresponding bridge entity. Six samples are selected for each setting, where the star represents the hidden state of the bridge entity and the circle represents the hidden state of the first hop data $(e_1, r_i)$ at position $r_i$. Colors from light to dark represent hidden states from shallow to deep. (b) Alignment analysis along the training steps for small initialized GPT-2 with complexity$=2$. The two images correspond to when the memorization on 1-hop ends but OOD generalization ability is missing, and the moment when OOD generalization is completed. Figure (a1) and (b1) shows the relationship between the input $a_i$ and the output logits of $c_i$, similarly to Fig.~\ref{fig::logits}.}
\label{fig:FIGURE_3_wd+init.pdf}
\end{figure*}

\section{Another direction: strengthening regularization and alignment}
\label{sec:high-complexity}

As dataset complexity increases, the implicit regularization is no longer sufficient to solve two-hop reasoning task since relying solely on identity bridge mechanism. We therefore strengthen regularization either by small initialization or by weight decay. Both interventions substantially recover OOD performance. 

To further investigate whether the model truly learns implicit reasoning rather than merely adopting shortcut patterns, we analyze the alignment of hidden states at the last token position across different layers using t-SNE dimensionality reduction for inputs $(e_1, r_1)$ and $e_2$. As illustrated in Figure 4(a), taking the hidden state of $e_2$ as a baseline, we observe that the hidden states of $(e_1, r_1)$ increasingly approach those of $e_2$ with large weight decay or small initialization. This trend suggests that the network progressively learns to associate the $(e_1, r_1)$ pair with the bridge entity $e_2$. In contrast, standard GPT-2 models do not exhibit such alignment, indicating that without the aid of proper regularization, the model fails to recognize the intrinsic connection between entities.

Furthermore, as Fig. \ref{fig:figure_alignment_along_epoch}(b) indicates, the emergence of identity bridge effects precedes the growth of generalization ability. With the alignment of hidden states improves during training, the test accuracy rises in step with this alignment trend. In other words, when the representations for one-hop data and the target bridge collapse into the same subspace, the second hop can reliably latch onto the correct features and composition succeeds.

\section{Discussion}

\paragraph{Conclusion} 
In this work, we revisit the two-hop curse phenomenon. We show that a minimal modification to the training data, the identity bridge, is sufficient to unlock OOD two-hop composition. Through a rigorous analysis of a simplified Emb--MLP model, we prove that identity bridge provably improves OOD two-hop generalization, while its absence inevitably leads to failure.

Empirically, we find that Emb--MLP closely approximates the behavior of standard GPT-2, indicating that standard transformers also tend to solve two-hop tasks via shortcut  rather than genuine step-by-step reasoning. Experiments with modern transformer models reproduce the same accuracy--complexity trade-off, broadening the evidence beyond GPT-2. Experiments on real language tasks suggest that pretraining already endows modern LLMs with implicit identity-bridge-like capabilities and help establish a subject-to-answer relationship. To move toward more faithful implicit reasoning, we investigate additional regularization methods.

\paragraph{Limitations}
Our theoretical analysis is conducted within a simplified Emb--MLP model with a uniform attention mechanism and does not explicitly capture the full attention dynamics of transformers. Furthermore, it focuses on two-hop reasoning in synthetic environments. Nevertheless, experiments show that this model tracks GPT-2 well across various complexity levels and reproduces its success and failure patterns with and without identity supervision. The additional architecture comparison reduces, but does not eliminate, this limitation: the same trend holds for three modern decoder variants, while substantially larger pretrained models and non-decoder architectures remain outside the controlled analysis.

\section*{Acknowledgements}
This work is also sponsored by the National Natural Science Foundation of China Grant No. 12371511, 12422119, 2025 Key Technology R\&D Program ``New Generation Information Technology'' Project 25511103100 of Shanghai Municipal Science and Technology Commission. 

\bibliography{colm2026_conference}
\bibliographystyle{colm2026_conference}
\newpage
\appendix
\section{Notations and Definitions}\label{sec::2hop-setup}
We adopt the following notation conventions.
All tokens in our synthetic tasks are encoded as positive integers.
Calligraphic capital letters (e.g., $\mathcal{E}, \mathcal{R}$) denote sets of tokens,
roman capital letters (e.g., $N, C$) denote scalar parameters such as dataset size and complexity,
and lower-case letters denote individual tokens.
The entity set $\mathcal{E}$ is partitioned into subject, bridge, and object subsets
$\mathcal{E}_1, \mathcal{E}_2, \mathcal{E}_3$, with typical elements
$e_{1,\cdot} \in \mathcal{E}_1$, $e_{2,\cdot} \in \mathcal{E}_2$, and $e_{3,\cdot} \in \mathcal{E}_3$.
The relation set $\mathcal{R}$ is split into first- and second-hop families
$\mathcal{R}_1$ and $\mathcal{R}_2$, with elements
$r_{1,\cdot} \in \mathcal{R}_1$ and $r_{2,\cdot} \in \mathcal{R}_2$.
For any set $S$, we write $|S|$ or $\#S$ for its cardinality.

\subsection{Two-Hop Reasoning Task Setup}
To study the mechanism behind compositionality gap, we introduce the synthetic dataset in which all tokens are positive integers partitioned into a disjoint set of entities \((\mathcal{E})\) and relations \((\mathcal{R})\).
Entities are split into subject, bridge, and object subsets:
\[
\mathcal{E} \;=\; \mathcal{E}_1 \cup \mathcal{E}_2 \cup \mathcal{E}_3,
\quad\text{with}\quad
\mathcal{E}_i \cap \mathcal{E}_j = \emptyset \ (i\neq j).
\]
Relations are divided into two disjoint families for the two hops:
\[
\mathcal{R} \;=\; \mathcal{R}_1 \cup \mathcal{R}_2,
\quad\text{with}\quad
\mathcal{R}_1 \cap \mathcal{R}_2 = \emptyset.
\]

\paragraph{One-hop tasks.}
We instantiate two one-hop tasks. For the first hop, we first partition the bridge entities $\mathcal{E}_2$ according to {complexity determined by the number of} relations in \(\mathcal{R}_1\):
\[
\mathcal{E}_2 = \bigcup_{j_1 =1}^{|\mathcal{R}_1|} \mathcal{E}_{2,j_1}.
\]
Then define a deterministic (or sampling) map
\[
g_1:\; \mathcal{E}_1 \times \mathcal{R}_1 \to \mathcal{E}_2
\quad\text{such that}\quad
g_1(e_1,r_{1,j_1}) \in \mathcal{E}_{2,j_1}.
\]
The first-hop triple set is
\[
\mathcal{T}_1 \;=\; \{\, (e_1, r_{1,j_1}, e_2)\ :\ e_1 \in \mathcal{E}_1,\ r_{1,j_1} \in \mathcal{R}_1,\ e_2 = g_1(e_1,r_{1,j_1}) \,\}.
\]
This partitioning of \(\mathcal{E}_2\) ensures that each \(r_{1,j_1} \in \mathcal{R}_1\) only co-occurs with bridge entities from its dedicated slice \(\mathcal{E}_{2,j_1}\), reducing spurious shortcuts across relations. Second-hop construction follows analogous principles.

\paragraph{Two-hop composition.}
A two-hop instance composes the two one-hop maps. Given \((e_1, r_{1,j_1}, e_2) \in \mathcal{T}_1\) and \((e_2, r_{2,j_2}, e_3) \in \mathcal{T}_2\), the composed query is \((e_1, r_{1,j_1}, r_{2,j_2})\) with answer
\[
(e_1, r_{1,j_1}, r_{2,j_2}) \;=\; g_2\bigl(g_1(e_1,r_{1,j_1}),\, r_{2,j_2}\bigr) \;=\; e_3.
\]

\paragraph{Identity Bridge.}
Unlike prior work, we also include a zero-hop task over bridge entities called identity bridge to establish the connection between two one-hop tasks and shape the model’s latent space.
For each bridge entity \(e_2 \in \mathcal{E}_2\) we add a training pair of the form
\[
( e_2 ) \;\to\; e_2,
\]
encouraging the model to implement an identity transformation \(f(e_2) = e_2\) on the bridge entities. This task is not introduced for its standalone difficulty, but to regularize representations so that the composed mapping \(g_2 \circ g_1\) can be more reliably recovered during two-hop generalization.

\subsection{Dataset Setup}\label{sec::data-definition}
\textbf{OOD two-hop reasoning.}
After determine the $g_1$ and $g_2$ maps, we can construct training dataset $\mathcal{D}_{\text{train}}$ and test dataset $\mathcal{D}_{\text{test}}$ where $\mathcal{D}_{\text{train}}$ contains all one-hop data and partial two-hop data and $\mathcal{D}_{\text{test}}$ contains only the two-hop data. To investigate OOD composition ability, a two-hop $(e_1, r_1, r_2)$ data is called out-of-distribution if the corresponding bridge entity $e_2$ has never appeared in the two-hop data of the training set. In this work, unless otherwise specified, the training set is restricted to contain only single-hop data, ensuring that all two-hop data are out-of-distribution.

\textbf{Dataset Complexity}
By adjusting the configurations of $g_1$, $g_2$, and the number of relations, we can control the complexity of the dataset. The dataset complexity is defined precisely as the number of object entities associated with each subject as follows:
\begin{equation*}
    \text{Complexity} = \max_{e_1 \in \mathcal{E}_1} \# \{ e_3 = (e_1, r_{1,j_1}, r_{2,j_2}) \ | \ r_{1,j_1} \in \mathcal{R}_1, r_{2,j_2} \in \mathcal{R}_2 \}.
\end{equation*}
To construct different complexity data, we instantiate families of datasets with controlled complexity. Fix $N\in\mathbb{N}$ and set $|\mathcal{E}_1|=|\mathcal{E}_3|=N$. Let $C\in\mathbb{N}$ denote the complexity parameter and take
\[
|\mathcal{E}_2|=CN,\qquad |\mathcal{R}_1|=C,\qquad |\mathcal{R}_2|=1 .
\]
Partition the bridge set evenly as
\[
\mathcal{E}_2 \;=\; \bigcup_{j_1 =1}^{C}\,\mathcal{E}_{2,j_1}, 
\qquad
\mathcal{E}_{2,j_1}\;=\;\{\,e_{2,k}\,:\,(j_1-1)N+1 \le k \le j_1 N\,\}.
\]
Write $\mathcal{R}_1=\{r_{1,1},\ldots,r_{1,C}\}$ and $\mathcal{R}_2=\{r_2\}$. The hop maps are defined by
\[
g_1(e_{1,i_1},\,r_{1,j_1}) \;=\; e_{2,(j_1-1)N+i_1}, 
\qquad 
g_2(e_{2,k},\,r_2) = e_{3, \left\lfloor \frac{k}{N} \right\rfloor +k\bmod N} ,
\]
so that each $r_{1,j}$ selects the $j$-th bridge slice and $r_2$ collapses the bridge index modulo $N$ onto $\mathcal{E}_3$. When $C=1$, we construct the structured setting used for analysis:
\[
\mathcal{E}_1=\{a_1,\ldots,a_N\},\quad
\mathcal{E}_2=\{b_1,\ldots,b_N\},\quad
\mathcal{E}_3=\{c_1,\ldots,c_N\},\quad
\mathcal{R}_1=\{r_1\},\ \mathcal{R}_2=\{r_2\},
\]
with one-to-one correspondences $g_1(a_{i_1},r_1)=b_{i_1}$ and $g_2(b_{i_2},r_2)=c_{i_2}$, hence the composed answer for $(a_{i_1},r_1,r_2)$ is $c_{i_1}$.

\section{Theoretical Details}
\label{sec::theoretical_details}
This appendix completes the proofs of Theorems~\ref{thm:pos-ood} and \ref{thm:neg-ood}. 
We first collect several standard tools and assumptions from the literature that will be used throughout the arguments, and then give the proofs via a detailed analysis of the nuclear-norm program—combining a constructive step with a contradiction argument.

\subsection{Preliminaries for KKT Conditions}

First we recall the notion of subdifferentials for convex functions and then state the KKT conditions in subdifferential form.

Let $\varphi : \mathbb{R}^d \to (-\infty,+\infty]$ be a proper convex function. The subdifferential of $\varphi$ at $\vx \in \mathbb{R}^d$ is defined by
\[
    \partial \varphi(\vx)
    :=
    \left\{
        \vs \in \mathbb{R}^d :
        \varphi(\vy) \ge \varphi(\vx) + \langle \vs, \vy - \vx \rangle
        \quad \forall\, \vy \in \mathbb{R}^d
    \right\}.
\]

Consider the following optimization problem $(\text{P})$ for $\vx \in \mathbb{R}^d$:
\begin{equation*}
    \begin{aligned}
        \min \quad & f(\vx) \\
        \textrm{s.t.} \quad & g_n (\vx) \ge 0   \quad \forall n \in [N],\\
                            & h_m (\vx) = 0 \quad \forall m \in [M],
    \end{aligned}
\end{equation*}
where $f, g_n, h_m$ are convex and finite-valued on their domains. We say that $\vx \in \mathbb{R}^d$ is a feasible point of $(\text{P})$ if $\vx$ satisfies $g_n (\vx) \ge 0$ for all $n \in [N]$ and $h_m (\vx) = 0$ for all $m \in [M]$.

\begin{defi}[KKT point]
    A feasible point $\vx$ of $(\text{P})$ is called a KKT point if there exist multipliers
    $\lambda_1, \dots, \lambda_N \ge 0$ and $\mu_1, \dots, \mu_M \in \mathbb{R}$ such that
    \begin{enumerate}[leftmargin = *]
        \item (Subdifferential stationarity) There exist subgradients
        $\vs_f \in \partial f(\vx)$, $\vs_{g_n} \in \partial g_n(\vx)$ for all $n \in [N]$,
        and $\vs_{h_m} \in \partial h_m(\vx)$ for all $m \in [M]$ such that
        \[
            \vs_f
            \;-\;
            \sum_{n=1}^N \lambda_n \vs_{g_n}
            \;-\;
            \sum_{m=1}^M \mu_m \vs_{h_m}
            \;=\; 0.
        \]
        Equivalently,
        \[
            0 \in
            \partial f(\vx)
            - \sum_{n=1}^N \lambda_n \partial g_n(\vx)
            - \sum_{m=1}^M \mu_m \partial h_m(\vx).
        \]
        \item (Complementary slackness) For all $n \in [N]$,
        \[
            \lambda_n g_n(\vx) = 0.
        \]
    \end{enumerate}
\end{defi}

In general, global minimizers of $(\text{P})$ do not necessarily satisfy the KKT conditions. However, under suitable regularity assumptions (such as a Slater-type constraint qualification), the KKT conditions become necessary for optimality. To ensure the validity and clarity of the subsequent analysis, we impose the following standard assumptions.

Next we recall a standard constraint qualification for convex optimization, known as Slater's condition, which guarantees that KKT conditions are satisfied at optimal solutions.

\begin{defi}[Slater's condition]
    Consider problem $(\text{P})$ above and assume that
    $f$ and $g_n$ are convex functions on $\mathbb{R}^d$ for all $n \in [N]$, and
    each $h_m$ is an affine function on $\mathbb{R}^d$ for all $m \in [M]$.
    We say that Slater's condition holds for $(\text{P})$ if there exists a point
    $\bar{\vx} \in \mathbb{R}^d$ such that
    \[
        g_n(\bar{\vx}) > 0 \quad \forall n \in [N],
        \qquad
        h_m(\bar{\vx}) = 0 \quad \forall m \in [M].
    \]
    Such a point $\bar{\vx}$ is called a strictly feasible point.
\end{defi}

Under Slater's condition, the KKT conditions become necessary (and, in the convex setting, also sufficient) for optimality.

\begin{prop}[KKT conditions under Slater's condition]
    Suppose that $(\text{P})$ is a convex optimization problem in the sense that
    $f$ and $g_n$ are convex and $h_m$ are affine, and that Slater's condition holds.
    Let $\vx^*$ be a global minimizer of $(\text{P})$. Then $\vx^*$ is a KKT point; that is,
    there exist multipliers $\lambda_1^*,\dots,\lambda_N^* \ge 0$ and
    $\mu_1^*,\dots,\mu_M^* \in \mathbb{R}$ such that $\vx^*$ and
    $(\lambda^*,\mu^*)$ satisfy the subdifferential KKT conditions above.
    Conversely, any KKT point of $(\text{P})$ is a global minimizer.
\end{prop}

In other words, for convex problems satisfying Slater's condition, the KKT conditions provide an exact characterization of optimal solutions.

\subsection{Auxiliary Results from the Literature}
We restate the external lemmas and assumptions needed in our proofs, in formulations specialized to our notation. Proofs are omitted and can be found in the citetd references.

\begin{lem}[Existence of a restricted form solution to (\ref{equ::min_nuclear_norm}), \citet{huang2025generalization} Lemma 3] 
\label{lem::RFS1}
Suppose $\mW$ is the solution to the optimization problem (\ref{equ::min_nuclear_norm}) with identity task. There exists a solution with parameter $a_1,  a_2, b_1, b_2, c_1, c_2, d_1, d_2, e, f, g, h$ such that 
\begin{equation}
\label{equ::RFS1}
\mW^{\T}= \left( \begin{array}{llll}
a_1 \boldsymbol{I}_n+ a_2 \boldsymbol{E}_n & b_1 \boldsymbol{I}_n+b_2 \boldsymbol{E}_n & e \mathbf{1}_n & f \mathbf{1}_n \\
c_1 \boldsymbol{I}_n+c_2 \boldsymbol{E}_n  & d_1 \boldsymbol{I}_n+d_2 \boldsymbol{E}_n & g \mathbf{1}_n & h \mathbf{1}_n
\end{array} \right).
\end{equation}
The parameters follow the following constraints:
\begin{equation}
    \begin{aligned}
        &a_1, d_1 \ge 1, \\
        &a_1 + a_2 + e \ge c_1 + c_2 + g + 1, \\
        &d_1 + d_2 + h \ge b_1 + b_2 + f + 1.
    \end{aligned}
\end{equation}
In addition, after introducing the identity mapping, the problem gains additional constraints:
\begin{equation}
    \begin{aligned}
        &b_1 \ge 1, \\
        &b_1 +b_2 \ge d_1 + d_2 + 1.
    \end{aligned}
\end{equation}
\end{lem}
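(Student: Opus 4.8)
The lemma is an existence-plus-symmetry statement about the convex program \eqref{equ::min_nuclear_norm}, so the plan is: (1) show an optimal solution exists; (2) identify the permutation symmetry of the program and symmetrize an optimizer; (3) read the block template \eqref{equ::RFS1} off from invariance; (4) substitute that template into the separation constraints to obtain the listed inequalities. For (1), the feasible set is a nonempty closed convex polyhedron — nonemptiness because a large enough multiple of a template that places positive mass on the entries $\mW_{a_i,b_i}$, $\mW_{b_i,b_i}$, $\mW_{b_i,c_i}$, on $\mW_{r_1,\cdot}$ over the $\mathcal{E}_2$-columns and $\mW_{r_2,\cdot}$ over the $\mathcal{E}_3$-columns, and nonpositive mass elsewhere, separates every training example from every wrong label — and $\tfrac12\|\cdot\|_*^2$ is continuous and coercive on it (sublevel sets are bounded since $\|\mW\|_*\ge\|\mW\|_F$), so a minimizer $\mW^\star$ exists.

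For (2)–(3): relabeling $\mathcal{E}_1,\mathcal{E}_2,\mathcal{E}_3$ simultaneously by the same $\sigma\in S_n$ maps $\mathcal{D}_{\text{train}}$ to itself, because the first-hop set $\{(a_i,r_1)\to b_i\}$, the second-hop set $\{(b_i,r_2)\to c_i\}$ and the identity set $\{(b_i)\to b_i\}$ are all ``diagonal''. On logit matrices this acts as $\mW\mapsto P_\sigma\,\mW\,Q_\sigma$, where $P_\sigma$ permutes the $\mathcal{E}_1$- and $\mathcal{E}_2$-rows (fixing the $r_1,r_2$ rows) and $Q_\sigma$ permutes the $\mathcal{E}_2$- and $\mathcal{E}_3$-columns; the constraint set is invariant, and $\|P_\sigma\mW Q_\sigma\|_*=\|\mW\|_*$ since $P_\sigma,Q_\sigma$ are orthogonal. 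Hence $\overline{\mW}:=\frac{1}{n!}\sum_{\sigma\in S_n}P_\sigma\mW^\star Q_\sigma$ is feasible (convexity of the polyhedron) and $\|\overline{\mW}\|_*\le\frac{1}{n!}\sum_\sigma\|P_\sigma\mW^\star Q_\sigma\|_*=\|\mW^\star\|_*$ (triangle inequality for $\|\cdot\|_*$), so $\overline{\mW}$ is an optimal, $S_n$-invariant solution. Now any $n\times n$ sub-block of $\overline{\mW}^{\T}$ that maps a permuted row-block to a permuted column-block commutes with the permutation representation of $S_n$ on $\R^n$; that representation is trivial $\oplus$ standard with each summand irreducible of multiplicity one, so its commutant is $\mathrm{span}\{\boldsymbol I_n,\boldsymbol E_n\}$ and each such block equals $x\boldsymbol I_n+y\boldsymbol E_n$. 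Any sub-block out of a fixed ($r_1$ or $r_2$) row into a permuted column-block is constant on that block, i.e.\ a multiple of $\mathbf 1_n$. Writing $\overline{\mW}^{\T}$ with row-blocks $(\mathcal{E}_2,\mathcal{E}_3)$ and column-blocks $(\mathcal{E}_1,\mathcal{E}_2,\{r_1\},\{r_2\})$ then produces exactly the template \eqref{equ::RFS1} with scalars $a_1,\dots,h$.

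For (4), since $\overline{\mW}$ is feasible I evaluate $s_{(X,y),y'}\ge1$ on the training examples using the template. On $(a_i,r_1)$ the logits are $a_1+a_2+e$ at $b_i$, $a_2+e$ at $b_{j\ne i}$, $c_1+c_2+g$ at $c_i$, $c_2+g$ at $c_{j\ne i}$, so separation from $b_{j\ne i}$ and from $c_i$ gives $a_1\ge1$ and $a_1+a_2+e\ge c_1+c_2+g+1$. On $(b_i,r_2)$ the logits are $b_1+b_2+f$ at $b_i$, $b_2+f$ at $b_{j\ne i}$, $d_1+d_2+h$ at $c_i$, $d_2+h$ at $c_{j\ne i}$, so separation from $c_{j\ne i}$ and from $b_i$ gives $d_1\ge1$ and $d_1+d_2+h\ge b_1+b_2+f+1$. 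On the identity example $(b_i)$ the logits are $b_1+b_2$ at $b_i$, $b_2$ at $b_{j\ne i}$, $d_1+d_2$ at $c_i$, $d_2$ at $c_{j\ne i}$, so separation from $b_{j\ne i}$ and from $c_i$ gives $b_1\ge1$ and $b_1+b_2\ge d_1+d_2+1$. These are precisely the inequalities in the statement; the remaining (off-diagonal wrong-label) separation constraints are not needed downstream and, where one wants them, follow from those listed once the diagonal coefficients are seen to be nonnegative.

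The step I expect to be most delicate is (3): the commutant computation must be stated carefully and the block bookkeeping kept straight — matching the four column-blocks of $\overline{\mW}^{\T}$ to the order used in \eqref{equ::RFS1}, and tracking on which orthogonal summand (trivial, spanned by $\mathbf 1_n$, vs.\ standard, its complement) each block lives — since a slip in the layout would swap the roles of the blocks and scramble the resulting inequality list. The symmetrization itself is routine once one observes that the relabeling group acts by the *same* permutation $\sigma$ on all three entity blocks, which is what forces the $x\boldsymbol I_n+y\boldsymbol E_n$ form rather than a general doubly-stochastic-type pattern.
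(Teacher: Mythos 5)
Your proposal is correct and follows essentially the same route as the paper: the paper cites \cite{huang2025generalization} for this lemma and proves the analogous no-identity version (Lemma~\ref{lem::RSF2}) by exactly your symmetrization argument — averaging an optimizer over simultaneous entity relabelings, using orthogonal invariance of the nuclear norm plus convexity to preserve optimality and feasibility, reading off the $x\boldsymbol{I}_n+y\boldsymbol{E}_n$ block structure, and then obtaining the listed inequalities by evaluating the margin constraints on the one-hop and identity examples. Your additions (coercivity for existence, the Schur/commutant gloss on why the invariant blocks lie in $\mathrm{span}\{\boldsymbol{I}_n,\boldsymbol{E}_n\}$) are harmless embellishments rather than a different method, though note the existence step is not strictly needed since the lemma presupposes an optimizer, and your feasibility template should take the bridge self-logit strictly larger than its cross-logit so the identity example has positive margin before scaling.
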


To analyze the optimal solution of optimization problem \ref{equ::min_nuclear_norm}, we need an explicit formula of the nuclear norm. After block multiplication, $\mW^{\T} \mW$ can be written as:
\begin{equation}
\label{equ::restricted_form}
\boldsymbol{W}^{\T} \boldsymbol{W}=\left(\begin{array}{ll}
C_{A 1} \boldsymbol{I}_n+C_{A 2} \boldsymbol{E}_n & C_{B 1} \boldsymbol{I}_n+C_{B 2} \boldsymbol{E}_n \\
C_{B 1} \boldsymbol{I}_n+C_{B 2} \boldsymbol{E}_n & C_{D 1} \boldsymbol{I}_n+C_{D 2} \boldsymbol{E}_n
\end{array}\right),
\end{equation}
where the coefficients are:
\begin{equation*}
\begin{aligned}
& C_{A 1}= a_1^2+ b_1^2 \\
& C_{A 2}= 2 a_1 a_2+ n a_2^2+ 2 b_1 b_2+ n b_2^2 + e^2+ f^2 \\
& C_{D 1}= c_1^2+ d_1^2 \\
& C_{D 2}=2 c_1 c_2+n c_2^2+ 2 d_1 d_2+n d_2^2 + g^2+ h^2 \\
& C_{B 1}= a_1 c_1+ b_1 d_1 \\
& C_{B 2}=a_1 c_2+a_2 c_1+n a_2 c_2+b_1 d_2+b_2 d_1+n b_2 d_2 + eg + fh.
\end{aligned}
\end{equation*}
The proof of Lemma \ref{lem::RFS1} is provided in \citet{huang2025generalization}.
Readers may also consult the proof of Lemma \ref{lem::RSF2}, as the underlying ideas are closely related.
{Through direct calculation, we have an explicit expression for the nuclear norm of $\mW$. We introduce the following notation to simplify the formula.
\begin{equation}
\label{equ::extra_notation1}
    \boldsymbol{u} = \left( a_1 + n a_2, b_1 + n b_2, \sqrt{n} e, \sqrt{n} f\right)^{\T}, \quad \boldsymbol{v} = \left( c_1 + n c_2, d_1 + n d_2, \sqrt{n} g, \sqrt{n} h\right)^{\T}
\end{equation}
and 
\begin{equation}
\label{equ::extra_notation2}
    A = \left( \begin{array}{cc}
        a_1 & b_1 \\
        c_1 & d_1
    \end{array}\right), \quad B = (\boldsymbol{u}, \boldsymbol{v})
\end{equation}
\begin{lem}[Closed-form of $\|\mW\|_*$ under the restricted form]
\label{lem:nuclear_norm}
Let $\mW$ be in the restricted form Eq. (\ref{equ::RFS1}). Then its nuclear norm admits the closed form
\begin{equation}
\label{equ::nuclear_norm}
\begin{aligned}
\|\mW\|_* &=
(n-1)\sqrt{\,C_{A1}+C_{D1}+2 \lvert a_1 d_1 - b_1 c_1\rvert\,} \\
&\ + \sqrt{\,C_{A1}+nC_{A2}+C_{D1}+nC_{D2}
+2\sqrt{(C_{A1}+nC_{A2})(C_{D1}+nC_{D2})-(C_{B1}+nC_{B2})^{2}}\,}.
\end{aligned}
\end{equation}
In particular, Eq. (\ref{equ::nuclear_norm}) can be reformulated as 
\begin{equation}
\label{equ::nuclear_norm_simple}
    \|\mW\|_* = (n-1) \| A \|_* + \|B \|_*.
\end{equation}
\end{lem}
}

\begin{proof}
{
Eq. (\ref{equ::nuclear_norm}) follows directly from the Lemma 6 in \citet{huang2025generalization}. Therefore, we focus on Eq. (\ref{equ::nuclear_norm_simple}). For the first term, we expand the expression according to the definition.
    \begin{equation}
        \sqrt{C_{A1} + C_{D1} + 2 |a_1 d_1 - b_1 c_1|} = \sqrt{a_1^2 + b_1^2 + c_1^2 + d_1^2 + 2|a_1 d_1 - b_1 c_1|}
    \end{equation}
Then, we check it is just $\| A\|_*$. Let $X \in \mathbb{R}^{n \times 2}$ and $\sigma_1, \sigma$ be singular values, we get $\| X\|_* = \sigma_1 + \sigma_2$. We use the following identity
\begin{equation}
    \sigma_1 + \sigma_2 = \sqrt{\sigma_1^2 + \sigma_2^2 + 2 \sigma_1 \sigma_2}. 
\end{equation}
Moreover, we use
\begin{equation}
    \sigma_1^2 + \sigma_2^2 = \| X\|_{\text{F}}^2, \quad \sigma_1 \sigma_2 = \sqrt{\operatorname{det} (X^{\T} X)}
\end{equation}
Substitute $A$ into above equation, we get the result. The proof for the second term is similar.
}
\end{proof}

To fully characterize the properties of the optimal solution to the optimization problem in (\ref{equ::min_nuclear_norm}), the mere existence of a solution is insufficient. To address this limitation, \citet{huang2025generalization} introduces the following assumption and establishes the uniqueness of the solution.

\begin{assump}
\label{assump::uniqueness}
    Suppose that $\mW$ is a solution to optimization problem (\ref{equ::min_nuclear_norm}) with (or without) identity mapping, but takes a different form from (\ref{equ::RFS1}) (or (\ref{equ::restricted_form2})). Then we assume that $\mathbf{1}_{2 n}^{\T} \mW^{\T} \neq \mathbf{0}_{2n+2}^{\T}$.
\end{assump}
{Based on Assumption \ref{assump::uniqueness}, the uniqueness theorem is stated as follows.}
\begin{thm}[Uniqueness, rephrased from \citet{huang2025generalization} Theorem 5]
    {
    Given Assumption \ref{assump::uniqueness}, the solution of optimization problem (\ref{equ::min_nuclear_norm}) with (or without) identity mapping has restricted form (\ref{equ::RFS1}) (or (\ref{equ::restricted_form2})).
    }
\end{thm}

\subsection{Proof for Theorem 1}
\label{subsec::proof_for_thm1}
{
We start with the following Proposition which characterize the optimal solution of optimization problem (\ref{equ::min_nuclear_norm}) with restricted form.
\begin{prop}
\label{prop::symmetry_prop}
    Suppose that $\mW$ is a optimal solution of optimization problem (\ref{equ::min_nuclear_norm}) with restricted form (\ref{equ::RFS1}). Then, $\mW$ satisfies $\mathbf{1}_{2n}^{\intercal} \mW^{\T} = \mathbf{0}_{2n+2}^{\intercal}$.
\end{prop}
\begin{proof}
    Using the notation introduced in Eqs. (\ref{equ::extra_notation1}) and (\ref{equ::extra_notation2}) and the Lemma \ref{lem:nuclear_norm}, the optimization problem can be reformulated as 
    \begin{equation}
    \begin{aligned}
        \min_{a_i, b_i, c_i, d_i, e, f, g, h} \quad & (n-1) \| A \|_* + \|B \|_* \\
        \textrm{s.t.} \quad & g_1: a_1 \ge 1,\\
                            & g_2: b_1 \ge 1,\\
                            & g_3: d_1 \ge 1,\\
                            & g_4: a_1 + a_2 + e \ge c_1 + c_2 + g + 1, \\
                            & g_5: b_1 + b_2 \ge d_1 + d_2 + 1, \\
                            & g_6: d_1 + d_2 + h \ge b_1 + b_2 + f + 1.
    \end{aligned}
    \end{equation}
    We assert that the optimal solution to the optimization problem satisfies the KKT conditions. To this end, we only need to verify the convexity and the Slater condition. 
    \\
    \textbf{Convexity.} Since the nuclear norm is convex, the $\| A\|_*$ is convex with respect to $a_1, b_1, c_1, d_1$. For the term $\| B\|_*$, it is also convex since it is a composition of convex function and affine transformation. Moreover, the optimization problem is convex since the objective function is convex and the constraints are affine. 
    \\
    \textbf{Slater condition.} We can construct a solution which is strictly feasible. Here, we give an example. Let $a_1 = 2, b_1 = 3, c_1 = 0.5, d_1 = 1.5, a_2 = 0, b_2 = 0, c_2 = 0, d_2 = 0, e = 0, g = 0, f=0, h=4$. Through direct testing, we can see that this solution is strictly feasible.
    \\
    Thus, we can use the KKT condition to characterize the optimal solution. Observing that $\| A\|_*$ is independent of $a_2, b_2, c_2, d_2, e, f, g, h$, we consider the stationary condition with respect to them since the symmetry of $B$ with respect to $\boldsymbol{u}$ and $\boldsymbol{v}$ will be helpful. Using chain rule, we get
    \begin{equation}
        \begin{aligned}
            &(a_2): 0 \in n (\partial_{\boldsymbol{u}} \| B\|_*)_1 - \lambda_4, \quad (c_2): 0 \in n (\partial_{\boldsymbol{v}} \| B\|_*)_1 + \lambda_4 \\
            &(b_2): 0 \in n (\partial_{\boldsymbol{u}} \| B\|_*)_2 - \lambda_5 + \lambda_6, \quad  (d_2): 0 \in n (\partial_{\boldsymbol{v}} \| B\|_*)_2 + \lambda_5 - \lambda_6 \\
            &(e): 0 \in (\partial_{\boldsymbol{u}} \| B\|_*)_3 - \lambda_4, \quad  (g): 0 \in (\partial_{\boldsymbol{v}} \| B\|_*)_3 + \lambda_4, \\
            &(f): 0 \in (\partial_{\boldsymbol{u}} \| B\|_*)_4 + \lambda_6, \quad  (h): 0 \in (\partial_{\boldsymbol{v}} \| B\|_*)_4 - \lambda_6, \\
        \end{aligned}
    \end{equation}
    Utilizing the symmetry, we find that there exist $\boldsymbol{s}_{\boldsymbol{u}} \in \partial_{\boldsymbol{u}} \| B \|_*$ and  $\boldsymbol{s}_{\boldsymbol{v}} \in \partial_{\boldsymbol{v}} \| B \|_*$ such that 
    \begin{equation}
    \label{equ::sym_subgradient}
        \boldsymbol{s}_{\boldsymbol{u}} + \boldsymbol{s}_{\boldsymbol{v}} = 0.
    \end{equation}
    \\
    Next, we proceed by case analysis.
    \begin{enumerate}[leftmargin=*]
        \item $\boldsymbol{u} = \boldsymbol{v} = 0$. In this case, we get $f = h = 0$ by the definition of $\boldsymbol{u}$ and $\boldsymbol{v}$. Then the inequality constraints $g_5$ and $g_6$ will be reformulated as 
        \begin{equation}
            b_1 + b_2 \ge d_1 + d_2 + 1, \quad d_1 + d_2 \ge b_1 + b_2 + 1.
        \end{equation}
        It makes a contradiction and implies that $\boldsymbol{u} = \boldsymbol{v} = 0$ is not a solution.
        \item $\boldsymbol{u} = 0$ or $\boldsymbol{v} = 0$. We just need to consider the $\boldsymbol{u} = 0$ case and the proof of $\boldsymbol{v} = 0$ case is similar. First, we recall the definition of the subderivative of nuclear norm. Let $X \in \mathbb{R}^{m \times n}$ with singular value decomposition $X = U \Sigma V^{\T}$, the subderivative of nuclear norm with respect to $X$ is 
        \begin{equation}
            \partial \| X\|_*  = \left\{ UV^{\T} + M : U^{\T} M=0, MV =0, \| M\|_2 \le 1 \right\}
        \end{equation}
        Substitute $B = (0, \boldsymbol{v})$ into the definition, we find
        \begin{equation}
            \boldsymbol{v}^{\T} M = 0, \quad M \boldsymbol{e}_2 = 0.
        \end{equation}
        It implies that 
        \begin{equation}
            \partial \| B\|_* = \left\{ (\boldsymbol{w}, \boldsymbol{v}) : \boldsymbol{w} \perp \boldsymbol{v}, \| \boldsymbol{w}\| \le 1 \right\}.
        \end{equation}
        Then the condition (\ref{equ::sym_subgradient}) implies that there exists $\boldsymbol{w}$ such $\boldsymbol{w}+ \boldsymbol{v} = 0$. It makes a contradiction since $\boldsymbol{w} \perp \boldsymbol{v}$.
        \item $\boldsymbol{u} \neq 0$ and $\boldsymbol{v} \neq 0$. We will further discuss two scenarios in this case. 
            \begin{enumerate}
                \item $\boldsymbol{u}$ and $\boldsymbol{v}$ are linearly independent. In this case, the subgradient equals the gradient because there are no singularities. By direct computation, we get 
                \begin{equation}
                \begin{aligned}
                    &\nabla_{\boldsymbol{u}} \| B\|_* = \frac{1}{\| B\|_*} \left( \boldsymbol{u} + \frac{\|\boldsymbol{v}\|_2^2 \boldsymbol{u} - \langle \boldsymbol{u},\boldsymbol{v}\rangle \boldsymbol{v}}{\sqrt{\|\boldsymbol{u}\|_2^2 \|\boldsymbol{v}\|_2^2 - \langle \boldsymbol{u},\boldsymbol{v} \rangle^2 }}\right) \\
                    &\nabla_{\boldsymbol{v}} \| B\|_* = \frac{1}{\| B\|_*} \left( \boldsymbol{v}+ \frac{\|\boldsymbol{u}\|_2^2 \boldsymbol{v} - \langle \boldsymbol{u},\boldsymbol{v}\rangle \boldsymbol{u}}{\sqrt{\|\boldsymbol{u}\|_2^2 \|\boldsymbol{v}\|_2^2 - \langle \boldsymbol{u},\boldsymbol{v} \rangle^2 }}\right)
                \end{aligned}
                \end{equation}
                Thus, the condition (\ref{equ::sym_subgradient}) implies that 
                \begin{equation}
                    \frac{1}{\| B\|_*} \left( 1 + \frac{\|\boldsymbol{v}\|_2^2 - \langle \boldsymbol{u},\boldsymbol{v} \rangle}{\sqrt{\|\boldsymbol{u}\|_2^2 \|\boldsymbol{v}\|_2^2 - \langle \boldsymbol{u},\boldsymbol{v} \rangle^2}}\right) \boldsymbol{u} + \frac{1}{\| B\|_*} \left( 1 + \frac{\|\boldsymbol{u}\|_2^2 - \langle \boldsymbol{u},\boldsymbol{v} \rangle}{\sqrt{\|\boldsymbol{u}\|_2^2 \|\boldsymbol{v}\|_2^2 - \langle \boldsymbol{u},\boldsymbol{v} \rangle^2}}\right) \boldsymbol{v} = 0.
                \end{equation}
                Since $\boldsymbol{u}$ and $\boldsymbol{v}$ are linearly independent, the coefficient must be zero. It implies
                \begin{equation}
                \label{equ::inver_CS}
                    \sqrt{\|\boldsymbol{u}\|_2^2 \|\boldsymbol{v}\|_2^2 - \langle \boldsymbol{u},\boldsymbol{v} \rangle^2} = \langle \boldsymbol{u},\boldsymbol{v} \rangle  - \| \boldsymbol{v}\|^2 \quad \text{and} \quad \sqrt{\|\boldsymbol{u}\|_2^2 \|\boldsymbol{v}\|_2^2 - \langle \boldsymbol{u},\boldsymbol{v} \rangle^2} = \langle \boldsymbol{u},\boldsymbol{v} \rangle  - \| \boldsymbol{u}\|^2
                \end{equation}
                which implies $\| \boldsymbol{u} \|_2^2 = \|\boldsymbol{v}\|_2^2$. Then by Cauchy-Schwarz inequality, we get $|\langle \boldsymbol{u}, \boldsymbol{v} \rangle| \le \|\boldsymbol{u}\| \|\boldsymbol{v}\|$. However, we find $\langle \boldsymbol{u},\boldsymbol{v}\rangle \ge \| \boldsymbol{u}\| \| \boldsymbol{v}\|$ by Eq. (\ref{equ::inver_CS}) and $\| \boldsymbol{u}\| = \| \boldsymbol{v} \|$. As a result, we find $\boldsymbol{u}$ is parallel to $\boldsymbol{v}$ which makes a contradiction.
                \item $\boldsymbol{u}$ and $\boldsymbol{v}$ are linearly dependent. In this case, we prove $\boldsymbol{u} = -\boldsymbol{v}$ which finishes the proof. Let $\boldsymbol{v} = \lambda \boldsymbol{u}$ and the problem turns into whether $\lambda = -1$. By the definition of the subgradient of the nuclear norm, we find
                \begin{equation}
                    \partial \|B \|_* = \left\{ \boldsymbol{u} (1, \lambda) + M : \boldsymbol{u}^{\T} M = 0, M (1, \lambda)^{\T} = 0, \| M\|_2 \le 1 \right\}.
                \end{equation}
                Let $M = (\boldsymbol{w}_1, \boldsymbol{w}_2)$, we will find
                \begin{equation}
                    \boldsymbol{u}^{\T} \boldsymbol{w}_1 = 0, \quad \boldsymbol{u}^{\T} \boldsymbol{w}_2 = 0
                \end{equation}
                Based on condition (\ref{equ::sym_subgradient}), we have
                \begin{equation}
                    \boldsymbol{u} (1+ \lambda) + (\boldsymbol{w}_1 + \boldsymbol{w}_2) = 0.
                \end{equation}
                Since $\boldsymbol{u}$ is perpendicular $\boldsymbol{w}_1, \boldsymbol{w}_2$, we get $1+ \lambda = 0$ and finish the proof.
            \end{enumerate}
    \end{enumerate}
\end{proof}
}
Using Proposition~\ref{prop::symmetry_prop}, the nuclear norm minimization problem (\ref{equ::min_nuclear_norm}) can be simplified to a more tractable form. The following lemma provides this equivalent formulation.

\begin{lem}
\label{lem::svm_simplified_form}
The  optimization (\ref{equ::min_nuclear_norm}) problem is equivalent to the following optimization problem.
\begin{equation}
\label{equ::simple_optimization1}
\begin{aligned}
    \min_{a_i, b_i, c_i, d_i, e, f, g, h} \quad & (n-1) \sqrt{M_1} + \sqrt{2 M_2} \\
    \textrm{s.t.} \quad & a_1, b_1, d_1 \ge 1, \\
    & a_1 + a_2 + e \ge c_1 + c_2 + g + 1, \\
    & b_1 + b_2 \ge d_1 + d_2 + 1, \\
    & d_1 + d_2 + h \ge b_1 + b_2 + f + 1, \\
    & a_1 + c_1 = -n(a_2 + c_2), \quad b_1 + d_1 = -n(b_2 + d_2), \\
    & e = -g, \quad f = -h,
\end{aligned}
\end{equation}
where the terms $M_1$ and $M_2$ are defined as
\begin{align*}
    M_1 &= a_1^2 + b_1^2 + c_1^2 + d_1^2 + 2|a_1 d_1 - b_1 c_1|, \\
    M_2 &= (a_1 + n a_2)^2 + (b_1 + n b_2)^2 + n e^2 + n f^2.
\end{align*}
\end{lem}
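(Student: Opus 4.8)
The plan is to reduce \eqref{equ::min_nuclear_norm} to \eqref{equ::simple_optimization1} in three moves: pass to the restricted parametric family furnished by Lemma~\ref{lem::RFS1}, turn Assumption~\ref{assump::symmetry_assumption} into explicit linear equations among the parameters, and then simplify the closed-form nuclear norm of Lemma~\ref{lem::nuclear_norm} on the resulting constraint set. First I would invoke Lemma~\ref{lem::RFS1}: there is a minimizer $\mW$ of \eqref{equ::min_nuclear_norm} whose transpose has the block form \eqref{equ::RFS1}, and on that family every separation constraint $s_{(X,y),y'}\ge 1$ is equivalent to the finite list of parameter inequalities stated there (obtained by evaluating the row-wise logit gaps of a restricted-form $\mW$). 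Since $t\mapsto\tfrac12 t^2$ is strictly increasing on $[0,\infty)$, minimizing $\tfrac12\|\mW\|_*^2$ and minimizing $\|\mW\|_*$ over the same feasible set have the same minimizers, so one may work with $\|\mW\|_*$ directly.

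Next I would compute $\mathbf{1}_{2n}^{\T}\mW^{\T}$ blockwise. Using $\mathbf{1}_n^{\T}\boldsymbol{I}_n=\mathbf{1}_n^{\T}$ and $\mathbf{1}_n^{\T}\boldsymbol{E}_n=n\,\mathbf{1}_n^{\T}$, the four block columns of \eqref{equ::RFS1} contribute $(a_1+c_1+n(a_2+c_2))\mathbf{1}_n^{\T}$, $(b_1+d_1+n(b_2+d_2))\mathbf{1}_n^{\T}$, $n(e+g)$, and $n(f+h)$. Setting this row equal to $\mathbf{0}_{2n+2}^{\T}$, as Assumption~\ref{assump::symmetry_assumption} dictates, yields exactly the equality constraints $a_1+c_1=-n(a_2+c_2)$, $b_1+d_1=-n(b_2+d_2)$, $e=-g$, $f=-h$ appearing in \eqref{equ::simple_optimization1}. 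Hence the feasible set of \eqref{equ::simple_optimization1} is precisely that of \eqref{equ::min_nuclear_norm} intersected with the restricted family and with these symmetry hyperplanes, and the inequality constraints of \eqref{equ::simple_optimization1} are just the ones carried over from Lemma~\ref{lem::RFS1}.

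Then I would simplify \eqref{equ::nuclear_norm} on this set. The radicand of the first term is $C_{A1}+C_{D1}+2|a_1 d_1-b_1 c_1|=a_1^2+b_1^2+c_1^2+d_1^2+2|a_1 d_1-b_1 c_1|=M_1$, straight from the formulas for $C_{A1}$ and $C_{D1}$. For the second term, a short expansion gives $C_{A1}+nC_{A2}=(a_1+na_2)^2+(b_1+nb_2)^2+ne^2+nf^2=M_2$; and the new equalities, rewritten as $c_1+nc_2=-(a_1+na_2)$, $d_1+nd_2=-(b_1+nb_2)$, $g^2=e^2$, $h^2=f^2$, force $C_{D1}+nC_{D2}=M_2$ and $C_{B1}+nC_{B2}=-(a_1+na_2)^2-(b_1+nb_2)^2-ne^2-nf^2=-M_2$. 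Therefore $(C_{A1}+nC_{A2})(C_{D1}+nC_{D2})-(C_{B1}+nC_{B2})^2=M_2^2-M_2^2=0$, the second term collapses to $\sqrt{M_2+M_2}=\sqrt{2M_2}$, and $\|\mW\|_*=(n-1)\sqrt{M_1}+\sqrt{2M_2}$, which is the objective of \eqref{equ::simple_optimization1}. Combined with the constraint identification above, this gives the equivalence, since conversely every feasible point of \eqref{equ::simple_optimization1} reassembles into a restricted-form $\mW$ feasible for \eqref{equ::min_nuclear_norm} with matching objective.

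I expect the only genuinely delicate point to be the justification that imposing the form \eqref{equ::RFS1} and the symmetry hyperplanes simultaneously does not discard a strictly better solution; this is cited rather than re-proved — Lemma~\ref{lem::RFS1} supplies an optimizer in the restricted family and Assumption~\ref{assump::symmetry_assumption} asserts it additionally satisfies $\mathbf{1}_{2n}^{\T}\mW^{\T}=\mathbf{0}_{2n+2}^{\T}$. The remaining risk is purely arithmetic: the cancellation $(C_{A1}+nC_{A2})(C_{D1}+nC_{D2})=(C_{B1}+nC_{B2})^2$ hinges on the $eg$ and $fh$ cross terms in $C_{B2}$ becoming $-e^2$ and $-f^2$ under $g=-e$, $h=-f$, so that expansion is the one step I would write out in full rather than merely sketch.
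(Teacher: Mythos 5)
Your proof is correct and follows essentially the same route as the paper: invoke the restricted form of Lemma~\ref{lem::RFS1}, translate Assumption~\ref{assump::symmetry_assumption} into the four linear equalities by computing $\mathbf{1}_{2n}^{\T}\mW^{\T}$ blockwise, and then substitute into the closed-form nuclear norm of Lemma~\ref{lem::nuclear_norm} to show $C_{A1}+nC_{A2}=C_{D1}+nC_{D2}=M_2$, $C_{B1}+nC_{B2}=-M_2$, hence the discriminant vanishes and the second radical collapses to $\sqrt{2M_2}$. You actually spell out a couple of steps the paper elides (the explicit derivation of the equality constraints, and the harmless replacement of $\tfrac12\|\mW\|_*^2$ by $\|\mW\|_*$), but there is no substantive difference in the argument.
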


\begin{proof}
The proof consists of two main steps. First, we establish several key identities among the problem's coefficients that arise from Proposition \ref{prop::symmetry_prop}. Second, we substitute these identities into the nuclear norm expression from (\ref{equ::nuclear_norm}) to derive the simplified objective function.

\textbf{Step 1: Deriving Identities from Proposition \ref{prop::symmetry_prop}.}
The four equality constraints presented in the lemma statement are a direct consequence of the structural properties imposed by Proposition \ref{prop::symmetry_prop}. Their derivation involves straightforward algebraic manipulation and is omitted for brevity.

These equalities lead to two crucial identities for the aggregated coefficients. First, we show that $C_{A1} + n C_{A2} = C_{D1} + n C_{D2}$. By expanding the terms, we have:
\begin{align*}
    C_{A1} + n C_{A2} &= a_1^2 + b_1^2 + n \left( 2 a_1 a_2 + n a_2^2 + 2 b_1 b_2 + n b_2^2 + e^2 + f^2\right) \\
    &= (a_1 + n a_2)^2 + (b_1 + n b_2)^2 + n e^2 + n f^2.
\end{align*}
Using the equality constraints like $a_1+c_1 = -n(a_2+c_2)$, the expression above is equivalent to $(c_1 + n c_2)^2 + (d_1 + n d_2)^2 + n g^2 + n h^2$, which is precisely the expansion of $C_{D1} + n C_{D2}$.

Second, we analyze the cross-term $C_{B1} + n C_{B2}$:
\begin{align*}
    C_{B1} + n C_{B2} &= a_1 c_1 + b_1 d_1 + n (a_1 c_2 + a_2 c_1 + n a_2 c_2 + b_1 d_2 + b_2 d_1 + n b_2 d_2 + eg + fh) \\
    &= (a_1 + n a_2) (c_1 + n c_2) + (b_1 + n b_2) (d_1 + n d_2) + n eg + n fh.
\end{align*}
Applying the equality constraints again, this simplifies to:
\begin{align*}
    C_{B1} + n C_{B2} &= -(a_1 + n a_2)^2 - (b_1 + n b_2)^2 - n e^2 - n f^2 = -M_2.
\end{align*}

\textbf{Step 2: Simplifying the Nuclear Norm.}
The second term of original objective function can be simplified based on extra equality constraints. We introduce coefficients $A = C_{A1} + n C_{A2}$, $D = C_{D1} + n C_{D2}$, and $B = C_{B1} + n C_{B2}$. From Step 1, we have established that $A=D=M_2$ and $B = -M_2$.

Consequently, the discriminant term $AD - B^2$ becomes:
$$ AD - B^2 = (M_2)(M_2) - (-M_2)^2 = M_2^2 - M_2^2 = 0. $$
When the discriminant is zero, the original nuclear norm expression (likely involving square roots of eigenvalues) simplifies significantly, yielding the objective function stated in (\ref{equ::simple_optimization1}). The remaining constraints are carried over directly, which completes the proof.
\end{proof}

We restate the optimization problem after reformulate and label each constraint to prepare for the optimal solution later.
\begin{equation}
\label{equ::simple_optimization2}
\begin{aligned}
    \min_{a_i, b_i, c_i, d_i, e, f, g, h} \quad & (n-1) \sqrt{a_1^2 + b_1^2 + c_1^2 + d_1^2 + 2 |a_1 d_1 - b_1 c_1|} \\
    &+ \sqrt{2} \sqrt{ (a_1 + n a_2)^2 + (b_1 + n b_2)^2 + n e^2 + n f^2}, \\
\end{aligned}
\end{equation}
The inequality constraints are
\begin{equation*}
\begin{aligned}
& g_1: a_1-1 \geq 0, \\
& g_2: a_1+a_2+2 e-c_1-c_2-1 \geq 0, \\
& g_3: b_1-1 \geq 0, \\
& g_4: b_1+b_2-d_1-d_2-1 \geq 0, \\
& g_5: d_1-1 \geq 0, \\
& g_6: d_1+d_2-b_1-b_2-2 f-1 \geq 0.
\end{aligned}
\end{equation*}
The equality constraints are
\begin{equation*}
    \begin{aligned}
        &h_1: a_1 + c_1 + n (a_2+ c_2) = 0, \\
        &h_2: b_1 + d_1 + n (b_2+ d_2) = 0.
    \end{aligned}
\end{equation*}

{To use KKT condition to characterize the solution, we can check that the optimization problem (\ref{equ::simple_optimization2}) is a convex problem and satisfies the Slater condition which is similar to the proof in Proposition \ref{prop::symmetry_prop}. Specific details are omitted here.
}

{Before formally proving Theorem 1, we will prove several propositions that will be helpful in the proof.}

\begin{prop}
\label{prop::identity1}
    Every optimal solution of reformulated optimization problem (\ref{equ::simple_optimization2}) satisfy:
    \begin{equation}
        a_1 + n a_2 = e,\quad c_1 + n c_2 = -e
    \end{equation}
    {in which $e \ge 0$.}
\end{prop}
\begin{proof}

Using stationarity property for parameter $c_2, a_2$ and $e$ and the fact $\boldsymbol{u} \neq 0$, we find that the optimal solution satisfies
\begin{align}
    &c_2\text{: } \quad  0 + \lambda_2 - \mu_1 n = 0 \quad \Rightarrow \quad \lambda_2 = \mu_1 n. \label{eq:S1} \tag{S1} \\
    &a_2\text{: } \quad  \sqrt{2}\frac{n(a_1 + n a_2)}{\sqrt{M_2}} - \lambda_2 - \mu_1 n = 0. \label{eq:S2} \tag{S2} \\
    &e\text{: } \quad  \sqrt{2}\frac{ne}{\sqrt{M_2}} - 2\lambda_2  = 0. \label{eq:S3} \tag{S3}
\end{align}
Substitute Eq. (\ref{eq:S1}) into Eqs. (\ref{eq:S2}) and (\ref{eq:S3}) respectively, we get 
\begin{equation*}
    a_1 + n a_2 = e.
\end{equation*}
Another equality comes from the equality constraint $a_1 + c_1 + n (a_2+ c_2) = 0$. {Moreover, $e \ge 0$ from Eq. (\ref{eq:S3}) and $\lambda_2 \ge 0$.}
\end{proof}

\begin{prop}
\label{prop::two_choose_one}
For every optimal solution of reformulated optimization problem (\ref{equ::simple_optimization2}), at least one of the following two inequality constraints is tight:
    \begin{equation}
        \begin{aligned}
            &g_1: a_1 - 1 \ge 0, \\
            &g_2: a_1 + a_2 + 2e - c_1 -c_2 -1 \ge 0.
        \end{aligned}
    \end{equation}
\end{prop}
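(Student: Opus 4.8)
I would prove Proposition~\ref{prop::two_choose_one} by contradiction. Suppose that at an optimal solution $(x^*,t^*)$ of \eqref{equ::simple_optimization2} neither $g_1$ nor $g_2$ is tight, i.e. $a_1^*>1$ and $a_1^*+a_2^*+2e^*-c_1^*-c_2^*-1>0$. By Assumption~\ref{assump::kkt_holds} the KKT conditions hold at $(x^*,t^*)$, so complementary slackness immediately gives $\lambda_1=\lambda_2=0$. The whole argument then consists of showing that killing these two multipliers over-determines the remaining stationarity equations.

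First I would propagate the consequences of $\lambda_2=0$ that are already visible in the proof of Proposition~\ref{prop::identity1}. Equation (S1) (stationarity in $c_2$) reads $\lambda_2=\mu_1 n$, and since $n>1$ this forces $\mu_1=0$. Equation (S3) (stationarity in $e$) reads $\sqrt{2}\,e^*/\sqrt{M_2}=2\lambda_2=0$, so combined with the identity $a_1^*+na_2^*=e^*$ from Proposition~\ref{prop::identity1} the quantity $a_1^*+na_2^*$ — which is exactly the $M_2$–gradient of $a_1$ — also vanishes. Hence in the stationarity equation for $a_1$ the entire $\sqrt{M_2}$ contribution disappears and only the $M_1$–term and the multipliers of the constraints touching $a_1$ remain.

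Next I would write out the two stationarity equations that collide. Only $g_1,g_2,g_7,g_8$ and $h_1$ involve $a_1$, and only $g_2,g_7,g_8$ and $h_1$ involve $c_1$ (note $c_1$ does not enter $M_2$ at all). Tracking signs from $g_7:\,t-(a_1d_1-b_1c_1)\ge 0$ and $g_8:\,t+(a_1d_1-b_1c_1)\ge 0$, and using $\lambda_1=\lambda_2=\mu_1=0$ together with the vanishing $M_2$–term, stationarity in $a_1$ reduces to
\[
(n-1)\frac{a_1^*}{\sqrt{M_1}} + (\lambda_7-\lambda_8)\,d_1^* = 0,
\]
and stationarity in $c_1$ reduces to
\[
(n-1)\frac{c_1^*}{\sqrt{M_1}} + (\lambda_8-\lambda_7)\,b_1^* = 0.
\]
Since $n>1$, $M_1>0$, the active lower bounds give $a_1^*,b_1^*,d_1^*\ge 1>0$, and Assumption~\ref{assump::non_degenerate_c1} gives $c_1^*>0$, the first equation forces $\lambda_7<\lambda_8$ while the second forces $\lambda_8<\lambda_7$ — a contradiction. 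Therefore the supposition fails and at least one of $g_1,g_2$ is tight.

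\textbf{Main obstacle.} The delicate part is entirely the bookkeeping: correctly enumerating which inequality and equality constraints of \eqref{equ::simple_optimization2} contain $a_1$ versus $c_1$, and with which signs, so that the two stationarity equations collapse to the clean ``one positive $M_1$–term plus $(\lambda_7-\lambda_8)$ times a positive quantity'' form; and verifying that the $\sqrt{M_2}$ contribution to the $a_1$–gradient genuinely vanishes (this is exactly where Proposition~\ref{prop::identity1} and equation (S3) enter, and implicitly where $M_2>0$ is needed for differentiability). Once both equations are written down, the sign analysis driven by the active lower bounds $a_1^*,b_1^*,d_1^*\ge 1$ and the non-degeneracy $c_1^*>0$ is immediate.
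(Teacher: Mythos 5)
Your proof is correct but follows a genuinely different route from the paper's. The paper argues by explicitly constructing a feasible descent direction: under the hypothesis that $g_1,g_2$ both have slack, it perturbs $(a_1,a_2,c_1,c_2)$ in a way that keeps $M_2$, $t$, and $u$ unchanged while strictly decreasing $M_1$ (using Proposition~\ref{prop::identity1} to guide the choice $\Delta a_2 = \tfrac{\alpha}{n}\varepsilon$ and the relation $d_1\alpha = b_1\gamma$ to preserve $u$), then verifies feasibility by noting the slack in $g_1,g_2$ and that the remaining constraints are unaffected. This contradicts optimality directly, without writing down the full Lagrangian.

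You instead read the contradiction straight off the KKT stationarity equations. Starting from $\lambda_1=\lambda_2=0$ (complementary slackness), you propagate through (S1)--(S3) and Proposition~\ref{prop::identity1} to get $\mu_1=0$, $e^*=0$, and $a_1^*+na_2^*=0$, so the $\sqrt{M_2}$ contribution drops out of the $a_1$-stationarity. The two remaining stationarity equations
\[
(n-1)\tfrac{a_1^*}{\sqrt{M_1}} + d_1^*(\lambda_7-\lambda_8)=0,\qquad
(n-1)\tfrac{c_1^*}{\sqrt{M_1}} + b_1^*(\lambda_8-\lambda_7)=0,
\]
are exactly the paper's $(\mathrm{S}_{a_1})$ and $(\mathrm{S}_{c_1})$ specialized to $\lambda_1=\lambda_2=\mu_1=0$, and the sign analysis (using $a_1^*,b_1^*,d_1^*\ge 1$ and $c_1^*>0$) forces $\lambda_7-\lambda_8$ to be simultaneously negative and positive. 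This is clean and self-contained given Assumption~\ref{assump::kkt_holds}, and it is closer in spirit to the paper's proof of the \emph{strengthened} version of this proposition (the one that follows immediately), which also resolves each case via sign contradictions in the same stationarity equations. The trade-off is that your route leans entirely on the KKT bookkeeping (and implicitly on $M_2>0$ for differentiability, which you flag), whereas the paper's descent-direction construction produces a certificate of non-optimality that is somewhat more transparent and less dependent on having written the multipliers correctly. Both arguments use Proposition~\ref{prop::identity1} and the non-degeneracy $c_1^*>0$ in the same essential place; the difference is how the final contradiction is materialized.
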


\begin{proof}
    We prove by contradiction. We show that if both two constraints are not tight, there is a feasible perturbation of this optimal point such that object value is strictly smaller.
    {
    We take 
    \begin{equation}
        \Delta a_1 = - \alpha \varepsilon, \quad \Delta b_1, \Delta b_2, \Delta d_1, \Delta d_2 = 0, \quad \Delta e, \Delta f = 0
    \end{equation}
    By proposition \ref{prop::identity1}, we take the following variables as 
    \begin{equation}
        \Delta a_2 = \frac{\alpha}{n} \varepsilon, \quad \Delta c_1 = - \gamma \varepsilon, \quad \Delta c_2 = \frac{\gamma}{n} \varepsilon.
    \end{equation}
    Next we show that by choosing appropriate parameters $\alpha$ and $\gamma$ , we can construct a feasible descent direction.
    \\
    \textbf{Step 1: Descent direction.}
    \\
    To prove that this direction is actually a descent direction, we consider the first-order approximation of the object function. First, We find that $\Delta M_2 = 0$ by our choice. Then we consider case by case:
    \begin{enumerate}[leftmargin=*]
        \item $a_1 d_1 - b_1 c_1 = 0$. In this case, we let $- \alpha \varepsilon d_1 - b_1 (-\gamma \varepsilon) = 0$ which implies $\alpha d_1 = b_1 \gamma$. As a result, we have  
        \begin{equation}
        \begin{aligned}
            \Delta M_1 &= 2 a_1 \Delta a_1 + 2 c_1 \Delta c_1 \\
            &= - 2 \alpha \varepsilon (a_1 + \frac{c_1 d_1}{b_1}) 
        \end{aligned}
        \end{equation}
            It is a descent direction since $c_1 = \frac{a_1 d_1}{b_1} > 0$ and we can take $\alpha > 0$.
        \item $a_1 d_1 - b_1 c_1 > 0$. In this case $M_1 = a_1^2 + b_1^2 + c_1^2 + d_1^2 + 2 (a_1 d_1 - b_1 c_1)$. We take $\gamma = 0$ and $\alpha$ sufficiently small and get $\Delta M_1 = -2(a_1 + d_1) \alpha \varepsilon$ which implies that it is a descent direction.
        \item $a_1 d_1 - b_1 c_1 < 0$. In this case, we have $M_1 = a_1^2 + b_1^2 + c_1^2 + d_1^2 + 2 (b_1 c_1 - a_1 d_1)$ and $c_1 > 0$. We take $\gamma$ sufficiently small and $\alpha = 0$. It is a descent direction. 
    \end{enumerate}
    }
    \textbf{Step 2: Feasible direction.}

    For inequality constraints $g_1$ and $g_2$, we can take $\varepsilon$ small enough to ensure that they can't hit the boundary. Inequality constraints $g_3$ to $g_6$ and equality constraint $h_2$ still hold since the relevant variables have not changed. Equality constraint $h_1$ still hold due to our choice for $a_1, a_2, c_1, c_2$. As a result, the construction is a feasible direction.

\end{proof}

Then, we claim that both two inequality constraints must be tight by further analysis. {Before we prove this result, we first prove the following proposition.}

\begin{prop}
\label{prop::positive_c_1}
{There exists $N > 0$ such that, for all $n > N$, every optimal solution of reformulated optimization problem (\ref{equ::simple_optimization2}) satisfy $c_1^* > 0$.}
\end{prop}
\begin{proof}
    {
    For the case of $a_1 d_1 - b_1 c_1 \le 0$, the result is trivial since $c_1 \ge \frac{a_1 d_1}{b_1} > 0$.
    And for the case of $a_1 d_1 - b_1 c_1 > 0$, we discuss case by case using Proposition \ref{prop::two_choose_one}.
    \begin{enumerate}[leftmargin=*]
        \item $g_1$ is tight, but $g_2$ is not. Based on complementary slackness of KKT condition, we know that $\lambda_2 = 0$. However, based on Eq. (\ref{eq:S1}), we get 
        \begin{equation*}
            \mu_1 = \frac{\lambda_2}{n} = 0.
        \end{equation*}
        Then, using the KKT condition with respect to $c_1$:
        \begin{equation}
            (n-1) \frac{c_1 - b_1}{\sqrt{M}_1} + \lambda_2 - \mu_1 = 0.
        \end{equation}
        It implies that $c_1 = b_1 >0$.
        \item $g_2$ is tight, but $g_1$ is not. First, we consider the perturbation of the optimal solution. To maintain the constrains, we let
        \begin{equation}
            \Delta a_1 = \varepsilon, \Delta a_2 = -\frac{\varepsilon}{n}, \Delta c_1 = \varepsilon, \Delta c_2 = - \frac{\varepsilon}{n}.
        \end{equation}
        It is a feasible direction and we consider
        \begin{equation}
            \Delta M_1 = 2 a_1 \varepsilon + 2 c_1 \varepsilon + 2 d_1 \varepsilon - 2 b_1 \varepsilon.
        \end{equation}
        It implies that 
        \begin{equation}
        \label{equ::singularity}
            a_1 + c_1 + d_1 - b_1 = 0.
        \end{equation}
        Otherwise, it will be a descent direction. Using Proposition \ref{prop::identity1}, we get
        \begin{equation*}
            a_2 = \frac{e- a_1}{n}, c_2 = \frac{-e - c_1}{n}.
        \end{equation*}
        Combine this with the constraint $g_2$, we get
        \begin{equation}
            \frac{n-1}{n} a_1 + \frac{2n +2}{n} e - \frac{n-1}{n} c_1 = 1.
        \end{equation}
        Then, we have 
        \begin{equation}
            1 \le a_1 \le \frac{n}{n-1}, 0 \le e \le \frac{1}{2n +2}, -\frac{1}{n-1} \le c_1 \le 0.
        \end{equation}
        Using Eq. (\ref{equ::singularity}), we have 
        \begin{equation}
            b_1 - d_1 = a_1 + c_1 \ge 1 - \frac{1}{n-1}
        \end{equation}
        We consider 
        \begin{equation}
        \begin{aligned}
            M_1 &= a_1^2 + b_1^2 + c_1^2 + d_1^2 + 2(a_1 d_1 - b_1 c_1) \\
                & \ge 1 + (2 - \frac{1}{n-1})^2 + 1 + 2 = 8 - O(\frac{1}{n}).
        \end{aligned}
        \end{equation}
        Thus the leading term of objective function in this case is at least $2n\sqrt{2}$. Therefore, if a specific construction can be given whose objective function is strictly smaller than the above cases, the proof is complete. The specific construction is given below. We let
        \begin{equation}
            a_1 = 1, b_1 = 2, c_1 = 0.5, d_1 = 1, e = 0.5, f = -1.
        \end{equation}
        And parameters are yet to be determined. Due to our choice, we check constraints $g_2, g_4, g_6$ and $h_1, h_2$ to determine them.
        \begin{equation}
        \begin{aligned}
            &g_2: a_2 - c_2  + 0.5 \ge 0 \\
            &g_4: b_2 - d_2 \ge 0 \\
            &g_6: d_2 - b_2 \ge 0 \\
            &h_1: 1.5 + n (a_2 + c_2) = 0 \\
            &h_2: 3 + n (b_2 + d_2) = 0.
        \end{aligned}
        \end{equation}
        Then we take 
        \begin{equation}
            a_2 = c_2 = - \frac{3}{4n}, b_2 = d_2 = -\frac{3}{2n}
        \end{equation}
        Then we compute the objective function
        \begin{equation}
            M_1 = 6.25, \quad M_2 =0.75 + 1.25 n
        \end{equation}
        As a result, the objective function
        \begin{equation}
            (n-1) \sqrt{6.25} + \sqrt{2} \sqrt{3.75 + 1.25 n} < 2\sqrt{2} n.
        \end{equation}
        Thus, for large $n$, it makes a contradiction. 
    \item $g_1$ is tight and $g_2$ is tight. We consider the following perturbation:
    \begin{equation}
        \Delta a_1 = 0, \Delta c_1 = \varepsilon, \Delta a_2 = 0, \Delta c_2 = - \frac{\varepsilon}{n}, \Delta e = \frac{\varepsilon - \frac{1}{n} \varepsilon}{2}
    \end{equation}
    Then we will find that
    \begin{equation}
        \Delta g_1 = 0, \Delta g_2 = 2 \Delta e - \Delta c_1 - \Delta c_2 = 0, \Delta h_1 = 0.
    \end{equation}
    Then we consider the change of the objective function:
    \begin{equation}
            \Delta ((n-1) \sqrt{M_1}) = (n-1) \frac{c_1 - b_1}{\sqrt{M_1}} \varepsilon \le - (n-1) \frac{1}{\sqrt{M_1}} \varepsilon \le - \frac{n-1}{2\sqrt{2}} \varepsilon
    \end{equation}
    Here we use $\sqrt{M_1} \le 2 \sqrt{2}$. If not, we can construct the solution with smaller leading term like the proof in case $2$ which implies that the solution is not optimal. Then, we consider the second term:
    \begin{equation}
        \Delta (\sqrt{2} \|u\|_2) = \sqrt{2} \frac{ne}{\| u\|} \Delta e
    \end{equation}
    Using inequality constraint $g_4$ and $g_6$, we get $f \le -1$ which implies $\| u\| \ge \sqrt{n}$. Moreover, we get $e \le \frac{1}{2(n+1)}$ because of $c_1 \le 0$ and $g_1, g_2$ are tight. As a result, we have 
    \begin{equation}
        \Delta (\sqrt{2} \| u \|) \le \frac{\sqrt{2}}{4} \frac{n-1}{(n+1)\sqrt{n}} \varepsilon
    \end{equation}
    It implies that 
    \begin{equation}
        \Delta ((n-1) \sqrt{M_1} + \sqrt{2} \| u \|) \le  - \frac{n-1}{2\sqrt{2}} \varepsilon + O (n^{-\frac{1}{2}}) \varepsilon < 0.
    \end{equation}
    Thus, we have finished the proof.
    \end{enumerate}
    }
\end{proof}

\begin{corollary}
\label{coro::tight_g_2}
{For all optimal solution of reformulated optimization problem, the inequality constraint $g_2$ must be tight.
}
\end{corollary}

\begin{proof}
    {According to Proposition \ref{prop::two_choose_one}, we consider the case $g_1$ is tight but $g_2$ is not.}
    Based on complementary slackness of KKT condition, we know that $\lambda_2 = 0$. However, based on Eq. (\ref{eq:S1}), we get 
    \begin{equation*}
        \mu_1 = \frac{\lambda_2}{n} = 0.
    \end{equation*}
    Furthermore, due to Eq. (\ref{eq:S2}), we get 
    \begin{equation*}
        e = \frac{n \sqrt{2}}{\sqrt{B}} ( \lambda_2 + \mu_1 n ) = 0
    \end{equation*}
    However, it makes an contradiction since the following constraints are violated due to {Proposition \ref{prop::positive_c_1}}:
    \begin{equation}
        a_1 + a_2 + 2e - c_1 - c_2 - 1 = 1 - \frac{1}{n} - c_1 + \frac{1}{n} c_1 -1 < 0.
    \end{equation}
\end{proof}

we now give the proof of Theorem \ref{thm:pos-ood}.

\begin{proof}[Proof of Theorem 1]
    Thanks to {Proposition \ref{prop::positive_c_1}}, to prove $q(X,y) >0$ for all OOD query $(X,y) = ((a_i, r_1, r_2),c_i)$, we just need to show that 
    \begin{equation}
    \label{equ::good_OOD_cond}
        c_1 + c_2 + g + h > a_1 + a_2 + e + f.
    \end{equation}
    This is because
    \begin{equation}
    \begin{aligned}
        s_{(X,y), b_j} &= c_1 + c_2 + g + h - \max \{ a_1, 0\} - a_2 - e - f, \quad \forall j \in [N]\\ 
        s_{(X,y), c_j} &= c_1, \quad \forall j \neq i.
    \end{aligned}
    \end{equation}
    {Here, $c_1 > 0$ by Proposition \ref{prop::positive_c_1}. So we only need to prove $s_{(X,y), b_j} > 0$.}
    Using the constraints $e + g =0$ and $f + h = 0$, inequality (\ref{equ::good_OOD_cond}) can be reformulated as 
    \begin{equation}
    \label{equ::margin_condition}
        c_1 + c_2 - (a_1 + a_2) > 2e + 2f.
    \end{equation}
    Utilizing Corollary \ref{coro::tight_g_2}, The left side of the inequality is simplified to  
    \begin{equation}
        \begin{aligned}
            c_1 + c_2 - (a_1 + a_2) & = a_1 + a_2 + 2e - 1 - (a_1 + a_2) \\
            & = 2e -1.
        \end{aligned}
    \end{equation}
    As a result, inequality (\ref{equ::margin_condition}) holds if and only if
    \begin{equation}
        f < -\frac{1}{2}.
    \end{equation}
    However, we get a better upper bound by combining inequality constraints $g_4$ with $g_6$ 
    \begin{equation}
        \begin{aligned}
        b_1 + b_2  &\ge d_1 + d_2 + 1 \\
            & \ge b_1 + b_2 + 2f + 2,
        \end{aligned}
    \end{equation}
    which implies that 
    \begin{equation}
        f \le -1.
    \end{equation}
    As a result, inequality (\ref{equ::margin_condition}) holds which implies $q(X,y) >0$ for all OOD query.
\end{proof}

\subsection{Proof for Theorem 2}
\label{subsec::proof_for_thm2}
We begin our proof with the following lemma, which makes fuller use of the symmetry of the problem.

\begin{lem}[Existence of symmetry solution]
\label{lem::RSF2}
Suppose $\mW$ is the solution to the optimization problem \ref{equ::min_nuclear_norm} without identical task. There exists a solution with $a_1, a_2, b_1, b_2$ and $\alpha, \beta$ such that 
\begin{equation}
\label{equ::restricted_form2}
\boldsymbol{W}^{\T}= \left( \begin{array}{llll}
a_1 \boldsymbol{I}_n+ a_2 \boldsymbol{E}_n & b_1 \boldsymbol{I}_n+b_2 \boldsymbol{E}_n & \alpha \mathbf{1}_n & \beta \mathbf{1}_n \\
b_1 \boldsymbol{I}_n+b_2 \boldsymbol{E}_n  & a_1 \boldsymbol{I}_n+a_2 \boldsymbol{E}_n & \beta \mathbf{1}_n & \alpha \mathbf{1}_n
\end{array} \right).
\end{equation}
The parameters follow the following constraints:
\begin{equation}
    \begin{aligned}
        &a_1 \ge 1, \\
        &a_1 + a_2 + \alpha \ge b_1 + b_2 + \beta + 1.
    \end{aligned}
\end{equation}
\end{lem}

\begin{proof}
We show that some orthogonal transformation of $\mW^{\T}$ remains a solution to the optimization problem. Let $\sigma$ be an arbitrary permutation of ${ 1, \dots, n}$, and let $\mP_{\sigma} \in \mathbb{R}^{n \times n}$ denote the associated permutation matrix. Now, consider a permutation of the logit matrix.
    \begin{equation*}
        \sigma (\mW^{\T}) = \left( 
        \begin{array}{cc}
            \mP_{\sigma} & 0 \\
            0  & \mP_{\sigma}
        \end{array}
        \right) \mW^{\T} \operatorname{diag} \{ \mP_{\sigma},\mP_{\sigma},1,1\}.
    \end{equation*}
    We find that $\sigma(\mW^{\T})$ is still an optimal solution of the optimization problem. To verify this, we consider 
    \begin{equation*}
    \begin{aligned}
         s_{(a_i, r_1), b_j} (\sigma (\mW^{\T})) &= s_{(a_{\sigma^{-1} (i)}, r_1), b_{\sigma^{-1} (j)}} ( \mW^{\T}) \ge 1, \quad \forall j \in [n] - \{ i\}, \\
         s_{(a_i, r_1), c_j} (\sigma (\mW^{\T})) &= s_{(a_{\sigma^{-1} (i)}, r_1), c_{\sigma^{-1} (j)}} ( \mW^{\T}) \ge 1, \quad \forall j \in [n], \\
         s_{(b_i, r_1), b_j} (\sigma (\mW^{\T})) &= s_{(b_{\sigma^{-1} (i)}, r_1), b_{\sigma^{-1} (j)}} ( \mW^{\T}) \ge 1, \quad \forall j \in [n], \\
         s_{(b_i, r_1), c_j} (\sigma (\mW^{\T})) &= s_{(b_{\sigma^{-1} (i)}, r_1), c_{\sigma^{-1} (j)}} ( \mW^{\T}) \ge 1, \quad \forall j \in [n] - \{ i\}.
    \end{aligned}
    \end{equation*}
    Moreover,  $\sigma(\mW^{\T})$ is another solution since orthogonal transformation does not change the nuclear norm. Consider the average over all possible permutations:
    \begin{equation*}
            \frac{\sum_{\sigma} \sigma(\mW^{\T})}{n !} = \left( \begin{array}{llll}
    a_1 \boldsymbol{I}_n+ a_2 \boldsymbol{E}_n & b_1 \boldsymbol{I}_n+b_2 \boldsymbol{E}_n & e \mathbf{1}_n & f \mathbf{1}_n \\
    c_1 \boldsymbol{I}_n+c_2 \boldsymbol{E}_n  & d_1 \boldsymbol{I}_n+d_2 \boldsymbol{E}_n & g \mathbf{1}_n & h \mathbf{1}_n
    \end{array} \right).
    \end{equation*}
    We further exploit the symmetry and consider the following transformation
    \begin{equation*}
        \tau(\mW^{\T}) = \left( 
        \begin{array}{cc}
            0 & \mI \\
            \mI & 0
        \end{array}
        \right) \mW^{\T} \operatorname{diag} \left\{ \left( 
        \begin{array}{cc}
            0 & \mI \\
            \mI & 0
        \end{array}
        \right), \left( 
        \begin{array}{cc}
            0 & 1 \\
            1 & 0
        \end{array}
        \right) \right\}.
    \end{equation*}
    Similar verification shows that $\tau(\mW^{\T})$ is still the solution to the optimization problem. Taking the sum of $\mW^{\T}$ and $\tau(\mW^{\T})$, we finish the proof.
\end{proof}

We have $\mathbf{1}^{\T} \mW^{\T} = 0$ similar to the proof of Proposition \ref{prop::symmetry_prop} and consider the problem
\begin{equation}\label{equ::optimization_faliure}
F \;=\; (n-1)\sqrt{\,2(a_1^2+b_1^2)+2|a_1^2-b_1^2|\,}\;+\;2\sqrt{(a_1+n a_2)^2+n\alpha^2}
\end{equation}
subject to
\begin{equation}
\begin{cases}
a_1 \ge 1, \\[3pt]
a_1+a_2+\alpha \;\ge\; b_1+b_2-\alpha+1, \\[3pt]
a_1+b_1+n(a_2+b_2)=0.
\end{cases}
\end{equation}
To prove Theorem \ref{thm:neg-ood}, we just need to prove the following condition holds for the optimal solution of optimization problem (\ref{equ::optimization_faliure}):
\begin{equation}
    b_1 + b_2 < a_1 + a_2
\end{equation}

\begin{proof}[Proof of Theorem 2]

\textbf{Step 0: Structural simplification.}
Using the identity
\[
a^2+b^2+|a^2-b^2|=2\max\{a^2,b^2\},
\]
the first square root in Eq. (\ref{equ::optimization_faliure}) reduces to
\[
\sqrt{\,2(a_1^2+b_1^2)+2|a_1^2-b_1^2|\,} \;=\; 2\max\{|a_1|,|b_1|\}.
\]
Since $a_1\ge1>0$, the objective becomes
\[
F \;=\; 2(n-1)\max\{a_1,|b_1|\}+2\sqrt{X^2+n\alpha^2}, \qquad X=a_1+n a_2.
\]
Meanwhile, the linear constraint rewrites as
\[
X=-(b_1+n b_2).
\]

\textbf{Step 1. Necessary condition at optimum.}
Fixing $(a_1,a_2)$ (so $X$ is fixed), we can vary $(b_1,b_2)$ while preserving $b_1+n b_2$; this leaves $X$ and the second term unchanged. If $|b_1|>a_1$, then decreasing $|b_1|$ towards $a_1$ reduces the first term and relaxes or maintains the inequality constraint, hence cannot be optimal. Therefore,
\[
|b_1|\le a_1,
\]
and the first term simplifies to $2(n-1)a_1$.

\textbf{Step 2. KKT analysis in the strict case $|b_1|<a_1$.}
In this regime, the objective is independent of $b_1,b_2$. We write down the stationary condition for $b_1, b_2$:
\begin{equation}
    \begin{aligned}
        \lambda_2 - \mu_1 &= 0 \\
        \lambda_2 - n \mu_1 & = 0.
    \end{aligned}
\end{equation}
Thus, we get $\lambda_1 = \mu_1 = 0$. Then we consider stationary condition for $a_2$:
\begin{equation}
    2n \frac{X}{\sqrt{X^2 + n \alpha^2}} - \lambda_2 - n \mu_1 = 0. 
\end{equation}
It implies $X = 0$. Finally, we consider $a_1$:
\begin{equation}
    2(n-1) + 2 \frac{X}{\sqrt{X^2 + n \alpha^2}} -\lambda_1 - \lambda_2 - \mu_1 = 0.
\end{equation}
As a result $\lambda_1 = 2(n-1)$. Based on complementary slackness, we have $a_1 =1$. Thus,
\[
a_2=-\frac1n,\qquad b_2=-\frac{b_1}{n},\qquad b_1+n b_2=0.
\]
Consequently,
\[
a_1+a_2=1-\frac1n, \qquad b_1+b_2=(1-\tfrac1n)b_1,
\]
and since $|b_1|<1$, we obtain
\[
b_1+b_2 < 1-\frac1n = a_1+a_2.
\]

\textbf{Step 3. Boundary case $|b_1|=a_1$.} 
We first find that $a_1 = 1$. Otherwise, we can perturb $a_1, a_2, b_1, b_2$ to lower the objective function while keeping $X$ fixed. The first term of the optimization object is $2(n-1)$. Moreover, we can consider another solution
\begin{equation}
    a_1 =1, a_2 = -\frac{1}{n}, b_1 = - \frac{1}{n-1}, b_2 = \frac{1}{n(n-1)}, \alpha = 0.
\end{equation}
Direct verification shows that the constraints are satisfied and the value of the objective function is $2(n-1)$. Thus, if we can find optimal solution in this case, the second term of objective function must be zero. As a result, we have
\begin{equation}
    a_1 + n a_2  = 0, \alpha = 0.
\end{equation}
Moreover, it implies that
\begin{equation}
    a_2 = -\frac{1}{n}, b_1 + n b_2 = 0
\end{equation}

Then we consider the following cases:
\begin{enumerate}[leftmargin=*, label=(\roman*)]
    \item $b_1 = -a_1 = -1$. We have $b_2 = \frac{1}{n}$ and $a_1 + a_2 = 1 - \frac{1}{n} > b_1 + b_2$.
    \item $b_1 = a_1 = 1$. It contradicts with the constraint $a_1 + a_2 + \alpha \ge b_1 + b_2 -\alpha + 1$.
\end{enumerate}
\end{proof}

\section{Experimental Details}

\subsection{Architecture Details}
\label{app:architecture}

\paragraph{Transformer (GPT-2 style).}
We use a standard decoder-only transformer with pre-norm residual blocks.
Let $d_{\text{vocab}}$ be the vocabulary size, $d_m$ the model width, $d_k$ the
per-head query/key dimension, $H$ the number of heads, $L$ the number of layers, and $T$ the context length.
Tokens $x_{1:T}$ are embedded by a lookup matrix
$\mE\!\in\!\mathbb{R}^{d_{\text{vocab}}\times d_m}$ and summed with learned
positional embeddings. Each layer $\ell=1,\dots,L$ applies

\[
\begin{aligned}
\mZ^{(\ell)} &= \mX^{(\ell)} + \mathrm{MHA}\!\left(\mathrm{LN}\!\left(\mX^{(\ell)}\right)\right),\\
\mX^{(\ell+1)} &= \mZ^{(\ell)} + \mathrm{MLP}\!\left(\mathrm{LN}\!\left(\mZ^{(\ell)}\right)\right),
\end{aligned}
\]

where $\mathrm{MHA}$ is causal multi-head attention with $H$ heads
(queries/keys/values computed by linear maps in $\mathbb{R}^{d_m\times Hd_k}$ and
output projection in $\mathbb{R}^{Hd_k\times d_m}$), and $\mathrm{MLP}$ is a
two-layer feed-forward network with GELU activation and hidden size $4d_m$.
LayerNorm (LN) is applied in the pre-norm configuration; dropout is disabled unless stated.
The language-model head shares weights with $\mE$ (tied embeddings) and projects to logits in $\mathbb{R}^{d_{\text{vocab}}}$.
\subsection{Synthetic Dataset Setup Details}

\subsection{More real world tasks}\label{sec:other_real_world_tasks}

As a supplement, similar evaluations on the MuSiQue dataset exhibit the same subject-to-answer association observed in TWOHOPFACT. In particular, when models produce correct multi-hop predictions, their internal representations and decoding behavior consistently reflect a direct dependency between the subject and the final answer, despite differences in dataset construction and question formats. This consistency suggests that the emergence of subject-to-answer associations is not specific to a particular benchmark, but instead reflects a more general property of models that successfully perform two-hop reasoning.

\begin{figure}[!htb]
    \centering
    \includegraphics[width=0.9\linewidth]{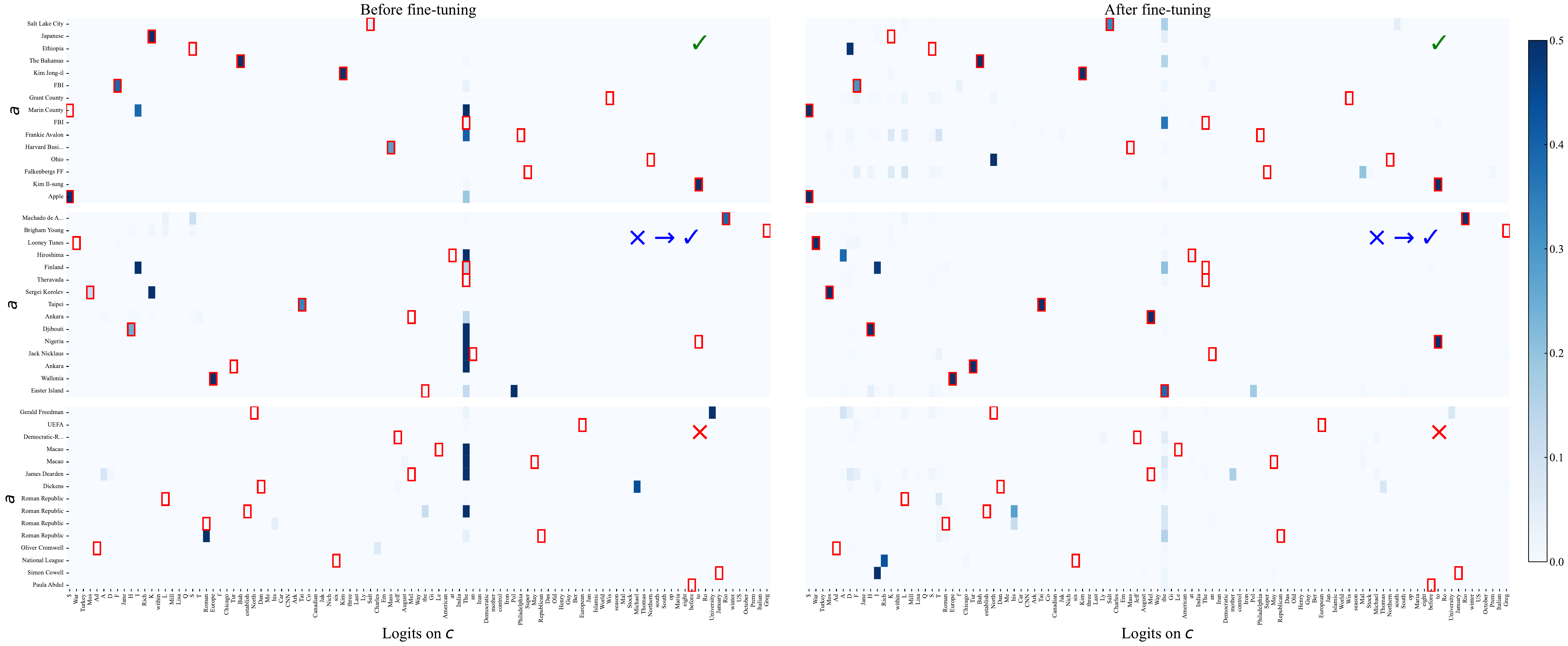}
    \caption{The probability of the model outputting the alternative city token when using the prompt corresponding to $(e_1, r_2)$ before and after fine-tuning on datasets. The red box indicates the answer to the corresponding two-hop reasoning data. The bar chart shows the change in the probability of the corresponding two-hop answer when using the prompt $(e_1, r_2)$ for different datasets. }
    \label{fig:layer_-1_novel_city_ar2_Musque-Qwen8B}
\end{figure}

\subsection{LLMs learn identity bridge in pretraining}
To further examine whether pretrained language models already encode an identity-consistent bridge mechanism, we analyze the repeat accuracy of OLMo checkpoints on a controlled probing task. In this experiment, the repeated entity is set to be the bridge token b in the TWOHOPFACT dataset. High repeat accuracy indicates that the model can reliably preserve and reproduce the bridge entity across positions, which is consistent with an identity mapping on the bridge token. Empirically, we find that OLMo checkpoints achieve consistently high repeat accuracy, suggesting that pretrained representations already support an implicit identity bridge prior to task-specific fine-tuning.

\begin{figure}[!htb]
    \centering
    \includegraphics[width=0.8\linewidth]{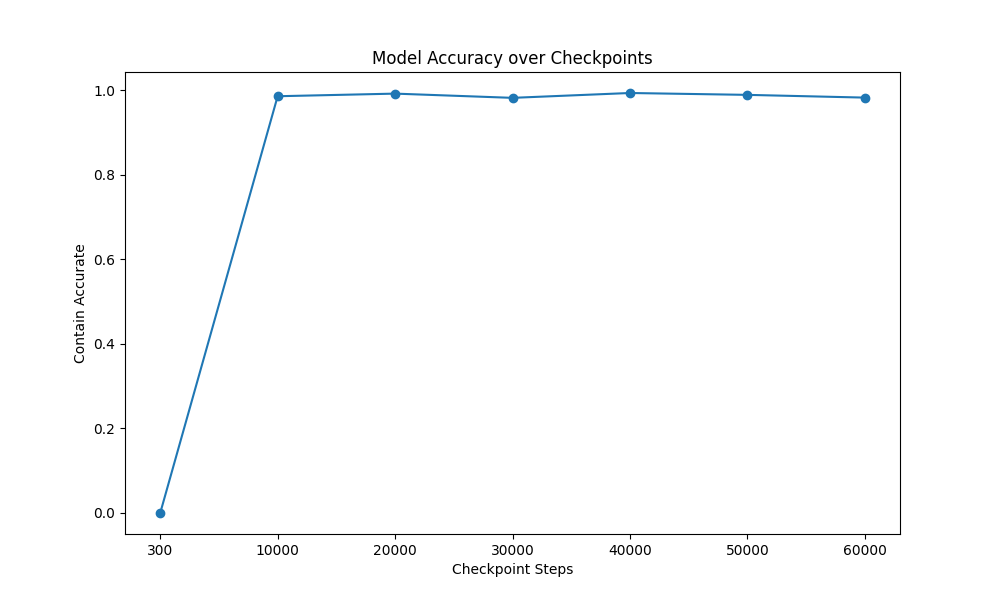}
    \caption{The repeat accuracy of Olmo checkpoints. The repeat entity are set as the bridge token $b$ of TWOHOPFACT dataset.}
    \label{fig:IDbridge_test}
\end{figure}

\subsection{Discussion about multi-hop reasoning and reverse curse}
\paragraph{Extension to 3-Hop Reasoning}\label{sec::extension_to_3hop}
Although our primary analysis focuses on two-hop reasoning, the same framework naturally extends to 3-hop reasoning with the single-layer Emb--MLP model. For simplicity, we consider the following chain of 3-hop reasoning:
\begin{equation*}
    a \xrightarrow{r_1} b\xrightarrow{r_2} c\xrightarrow{r_3}d
\end{equation*}
Based on our theoretical findings, the Emb--MLP model can learn 3-hop reasoning only if it successfully identifies the relationships from $a$ to $d$, which it can extract via the critical token $r_3$. In Fig.~\ref{fig::logits-3hop}, we present four different training setups: (a) only 1-hop pairs $(a, r_1, b), (b,r_2,c), (c, r_3, d)$; (b) 1-hop pairs with identity bridges $b \to b, c\to c$; (c) 1-hop pairs combined with two-hop pairs $(a, r_1, r_2, c), (b, r_2, r_3, d)$; and (d) a combination of 1-hop, two-hop pairs, and identity bridges.

In Fig.~\ref{fig::logits-3hop}(a) and (b), the model successfully captures the first and second hop patterns but fails to establish the 3-hop relationship. Training with two-hop data, however, strengthens the logits from $a\to c$ and from $b \to d$, which helps the model successfully complete the $a\to d$ connection, as shown in Fig.~\ref{fig::logits-3hop}(c) and (d).

\paragraph{Connection to the `Reversal Curse'.}
Some studies have observed that autoregressive large models fail to recognize the reversal of relations, such as not understanding that $a=b$ implies $b=a$. Several works have proposed transformer improvements to address this issue. In Fig.~\ref{fig::logits-3hop}(d), we observe that when the output is $c$, there is a clear activation on the diagonal of the group $\mathcal{E}_3 \to \mathcal{E}_2$ where the output is $b$. This indicates that with the help of the identity bridge, the model can establish connections such as $b \to b, c \to c, b \to c$, thereby facilitating the model's awareness of the relationship from $c$ to $b$. This suggests that the identity bridge may also offer a potential solution to the Reversal Curse problem.

For completeness, we include additional results on 3-hop reasoning with the single-layer Emb–MLP model. Fig.~\ref{fig::logits-3hop} visualizes row-wise logit patterns under different training setups, illustrating how two-hop supervision and identity bridges facilitate the formation of the final subject-to-answer association.

\begin{figure}[h]
\centering
\includegraphics[width=\linewidth]{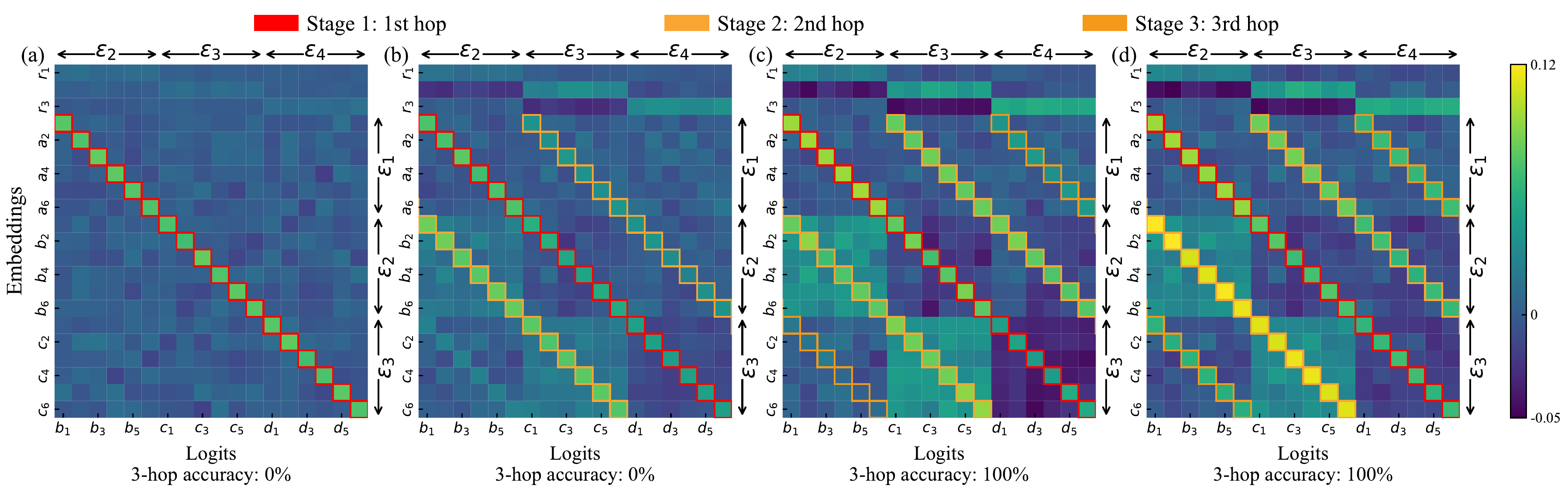}
\caption{{Row-wise logit templates in 3-hop reasoning task $(a, r_1, r_2, r_3, d)$ of Emb--MLP model in different situations: (a) train on 1-hop (b) train on 1-hop and identity bridge (c) train on 1-hop and two-hop (d) train on 1-hop, two-hop and identity bridge. two-hop and identity bridge help the model to construct the relation between $a$ to $d$ highlighted by `Stage 3'. }
}
\label{fig::logits-3hop}
\end{figure}

\section{Experiments Compute Resources}
\label{sec:resources}
The experiments were conducted on a server with the following configuration:
\begin{itemize}
    \item 48 AMD EPYC 7352 24-Core Processors, each with 512KB of cache
    \item 251GB of total system memory
    \item 8 NVIDIA GeForce RTX 4080 GPUs with 16GB of video memory each
    \item The experiments were run using Ubuntu 22.04 LTS operating system
\end{itemize}

\end{document}